%% file: example_paper.tex
\documentclass{article}

\usepackage{microtype}
\usepackage{graphicx}
\usepackage{subcaption}
\usepackage{booktabs} %

\usepackage{hyperref}

\usepackage[accepted]{icml2026}

\usepackage{amsmath}
\usepackage{amssymb}
\usepackage{mathtools}
\usepackage{amsthm}

\usepackage[capitalize,noabbrev]{cleveref}

\usepackage{wrapfig}
\usepackage{outlines}

\usepackage{graphicx} 

\usepackage{enumitem}
\setlist{leftmargin=*}

\usepackage{amsmath}

\NewDocumentCommand{\increase}{m m o}{%
    \IfValueTF{#3}{%
        ${#1}_{\textcolor{red}{#2\uparrow}}^{#3}$ %
    }{%
        $#1_{\textcolor{red}{#2\uparrow}}$ %
    }%
}%
\NewDocumentCommand{\decrease}{m m o}{%
    \IfValueT{#3}{%
        $#1_{\textcolor{green}{#2\downarrow}}^{#3}$ %
    }{%
        $#1_{\textcolor{green}{#2\downarrow}}$ %
    }%
}%

\definecolor{ETHBlue}{RGB}{33,92,175}	%
\definecolor{ETHGreen}{RGB}{98,115,19}		%
\definecolor{ETHPurple}{RGB}{163,7,116}	%
\definecolor{ETHGray}{RGB}{111,111,111}	%
\definecolor{ETHRed}{RGB}{183,53,45}	%
\definecolor{ETHPetrol}{RGB}{0,120,148}	%
\definecolor{ETHBronze}{RGB}{142,103,19}	%
\definecolor{ETHPurpleLight}{RGB}{220, 158, 201}	%
\definecolor{ETHPurpleDark}{RGB}{140,10,89}	%

\newtoggle{color-macro}
\settoggle{color-macro}{false} %

\iftoggle{color-macro}{
\colorlet{MacroColor}{ETHPetrol}
}{
\colorlet{MacroColor}{black}
}

\newtheorem{definition}{Definition}

\newtheorem{theorem}{Theorem}

\newtheorem{assumption}{Assumption}

\usepackage{tcolorbox}
\usepackage{tikz}
\usetikzlibrary{patterns}
\usepackage{pgfplots}
\pgfplotsset{compat=1.18}
\usepgfplotslibrary{statistics} %
\usepackage{pgfplotstable}

\usepackage{scalefnt}
\usepackage{caption}
\usepackage[font=small,labelfont=bf]{subcaption}
\usepackage{anyfontsize}
\usepackage{multirow}
\usepgfplotslibrary{fillbetween}
\usepackage{changepage}
\usepackage{multicol}
\usepackage{setspace}
\usepackage[version=4]{mhchem}
\usepackage[colorinlistoftodos]{todonotes}

\usepackage{cleveref}
\crefname{section}{\S}{\S\S}
\Crefname{section}{\S}{\S\S}
\crefname{table}{Tab.}{Tabs.}
\crefname{figure}{Fig.}{Figs.}
\crefname{algorithm}{Alg.}{}
\crefname{appendix}{App.}{Apps.}
\crefname{lemma}{Lemma}{}
\Crefname{theorem}{Theorem}{}
\Crefname{assumption}{Assumption}{}
\crefname{proposition}{Proposition}{}
\crefname{hypothesis}{Hypothesis}{}
\crefname{deduction}{Deduction}{}
\crefname{intuition}{\textbf{Intuition}}{\textbf{Intuitions}}
\crefname{observation}{\textbf{Observation}}{\textbf{Observations}}
\crefname{finding}{\textbf{Finding}}{\textbf{Findings}}
\crefname{cor}{Corollary}{}
\crefname{align}{}{}
\crefname{equation}{}{}

\usepackage{xurl}

\icmltitlerunning{Diversity Matters: Revisiting Test-Time Compute in Vision-Language Models}

\begin{document}

\twocolumn[
  \icmltitle{Diversity Matters: Revisiting Test-Time Compute in Vision-Language Models}

  \icmlsetsymbol{equal}{*}

    \begin{icmlauthorlist}
    \icmlauthor{Yijie Tong}{equal,ethz}
    \icmlauthor{Yifan Hou}{equal,ethz}
    \icmlauthor{Shaobo Cui}{sjtu,epfl}
    \icmlauthor{Antoine Bosselut}{epfl}
    \icmlauthor{Mrinmaya Sachan}{ethz}
    \\
    $\{$
    \texttt{\href{mailto:yijie.tong@uzh.ch}{yijie.tong},}
    \texttt{\href{mailto:yifan.hou@inf.ethz.ch}{yifan.hou},}
    \texttt{\href{mailto:mrinmaya.sachan@inf.ethz.ch}{mrinmaya.sachan}}
    $\}$\texttt{@inf.ethz.ch}
    \\ 
    \texttt{\href{mailto:shaobo.cui@sjtu.edu.cn}{shaobo.cui@sjtu.edu.cn},}
    \texttt{\href{mailto:antoine.bosselut@epfl.ch}{antoine.bosselut@epfl.ch}}
  \end{icmlauthorlist}

  \icmlaffiliation{ethz}{ETH Zürich}
  \icmlaffiliation{sjtu}{Shanghai Jiao Tong University}
  \icmlaffiliation{epfl}{EPFL}

  \icmlcorrespondingauthor{Yifan Hou}{yifan.hou@inf.ethz.ch}
  \icmlcorrespondingauthor{Mrinmaya Sachan}{mrinmaya.sachan@inf.ethz.ch}

  \icmlkeywords{Machine Learning, ICML}

  \vskip 0.3in
]

\printAffiliationsAndNotice{\icmlEqualContribution}

\begin{abstract}
Test-time compute (TTC) strategies have emerged as a lightweight approach to boost reasoning in large language models (LLMs). However, their application and benefits for vision-language models (VLMs) remain underexplored.
We present a systematic study of TTC across seven VLMs and six benchmarks, specifically analyzing feature-based scoring and majority voting methods.
We find that feature heuristics fail and voting yields only modest gains in single-model settings.
We theoretically show that this limitation stems from a lack of \emph{prediction diversity}:
when outputs are highly correlated, voting provides little benefit. 
In contrast, multi-model ensembles offer richer diversity, yet standard majority voting fails to account for varying model capabilities.
To address this, we propose \emph{Entropy-based TTC} (ETTC), which selects the most confident prediction based on predictive entropy. 
Our method reduces to majority voting in the single-model case, but in model ensembles, it leverages confidence disparities to prioritize stronger models. 
We prove that ETTC outperforms majority voting under mild assumptions and empirically demonstrate that it consistently surpasses both voting and the best individual model.
Crucially, our results show that smaller models can synergistically enhance larger ones, unlocking ensembling gains not achievable with standard strategies.\footnote{Our code is publicly available here: \url{https://github.com/nanfang-wuyu/Diversity-Matters}.}
\end{abstract}

\section{Introduction}\label{sec:intro}
Vision-Language Models (VLMs) have recently achieved remarkable performance across a range of visual reasoning benchmarks~\citep{dubey24_arxiv_llama32, agrawal24_arxiv_pixtral12b, kamath25_arxiv_gemma3, bai25_arxiv_qwen25vl, openai23_arxiv_gpt4v, comanici25_arxiv_gemini25pushingfrontier}. 
At the same time, the large language modeling (LLM) community has developed a family of \emph{test-time compute} (TTC) strategies, particularly those based on \emph{chain-of-thought} (CoT) prompting, to improve reasoning without modifying model parameters~\citep{snell24_arxiv_ttc_deepmind}. 
These strategies generate multiple outputs per input and then aggregate or rank them to produce more reliable predictions.

In the LLM literature, inference-only TTC methods fall broadly into two paradigms.
The first is \emph{response selection} (often framed as Best-of-N), which scores and selects the most promising reasoning trace. While many of these methods rely on trained reward models, \emph{feature-based} approaches avoid training by estimating quality through textual heuristics. These include analyzing structural signals such as the presence of specific pivot words~\citep{chang25_arxiv_pivotword_alternatively,lippmann25_arxiv_pivotword_substance}, confident linguistic tone~\citep{DBLP:conf/emnlp/JiangLLBZSWL25,mao25_arxiv_cottongue}, or the length of the reasoning chain~\citep{fu23_iclr_cotlen_msreasoing,jin24_acl_cotlenhelps}.
In contrast, \emph{confidence-based} (e.g., self-consistency) methods treat the model as a stochastic oracle and improve reasoning reliability by aggregating multiple outputs, typically selecting the most frequent answer across samples via majority voting~\citep{wang23_iclr_selfconsis,chen24_acl_selfconsis_para,snell24_arxiv_ttc_deepmind}.

Applying TTC to VLMs, however, is far from straightforward. Unlike LLMs, VLMs must first perceive and interpret dense visual signals before reasoning over them. This introduces new challenges: 
(i)~visual perception is inherently error-prone and varies across models~\citep{bhattacharyya24_iclr_visualgrounding, wang25_arxiv_visualgrounding}; 
(ii)~vision-language alignment remains imperfect, creating subtle inconsistencies~\citep{li25_arxiv_vtalignmentmi, yan25_acl_vtalignmentbenchmark}; and 
(iii)~textual cues that correlate with correctness in LLMs may not reflect true visual understanding~\citep{altahan24_arxiv_textbutnotvisual, jiang25_arxiv_cotquality4vlm}. 
Therefore, it is unclear whether and when TTC strategies can reliably enhance visual reasoning.

To investigate this, we begin with the \emph{single-model (multi-round)} setting, where one VLM is queried multiple times with some notion of randomness (\cref{sec:evaluation}). 
Our findings reveal that:
(1)~feature-based methods fail to improve accuracy, showing that linguistic style is a poor proxy for visual reasoning quality; and 
(2)~confidence-based methods such as majority voting provide only modest, but consistent, gains, and only when CoT prompting is used. Without CoT, even aggregation brings no benefit.

Next, we investigate why these gains are so limited. 
Specifically, we analyze the \textit{diversity} (formally, the \textit{statistical dependency}) between predictions and show that the effectiveness of voting decreases as predictions become more correlated (\cref{sec:when:theory}). 
When model outputs are nearly identical, voting cannot amplify the signal of correctness. 
Empirically, we confirm this across 7 VLMs and 6 datasets: outputs exhibit weak but nonzero dependency, which explains why voting offers only small improvements in practice (\cref{sec:when:exp}).

These insights point to a deeper limitation: in the single-model setting, diversity arises only from sampling randomness, so the expected skill of the model remains unchanged. 
By contrast, the \emph{multi-model ensemble} setting naturally introduces stronger diversity: differences in architecture, training data, and even scale create complementary strengths. 
This makes ensembles both more realistic in practice and more promising for TTC. 
Existing methods, such as majority voting, cannot exploit this potential: by treating all models equally, voting risks letting weaker but correlated models dominate the outcome. 
What is needed is a strategy that adapts to model quality and selectively prioritizes the most reliable predictions.

To address this, we introduce a new strategy for visual reasoning: \emph{Entropy-based TTC (ETTC)} (\cref{sec:ettc:method}). 
Instead of counting votes, ETTC selects the prediction with the lowest entropy (on the answer distribution from multiple responses), that is, the most confident output distribution. 
In the single-model setting, ETTC reduces to majority voting, ensuring backward compatibility. 
But in multi-model ensembles, ETTC diverges from standard voting: it leverages confidence gaps across models, allowing smaller models to assist stronger ones rather than overwhelm them. 
We theoretically prove that ETTC outperforms majority voting under mild dependence assumptions (\cref{sec:ettc:theory}), and empirically show that it not only improves over voting but can even surpass the best individual model in the ensemble (\cref{sec:ettc:results}). 
This result is particularly striking: \emph{smaller models can enhance larger ones when combined wisely}, yielding gains not achievable with voting alone.

\noindent In summary, our contributions are:
\begin{itemize}[leftmargin=*]
    \item A systematic theoretical and empirical study of TTC in VLMs, showing that feature cues fail and that majority voting yields only modest CoT-dependent gains (\cref{sec:evaluation}).
    \item A theoretical analysis linking the effectiveness of voting to prediction dependency, supported by empirical evidence across diverse models and datasets (\cref{sec:when}).
    \item A new TTC strategy that generalizes majority voting and achieves consistent improvements in multi-model ensembles, often surpassing even the best single model (\cref{sec:ettc}).
\end{itemize}

\input{figures/results_ttc}

\section{Preparation}
We begin by outlining the models, datasets, prompting templates, baselines, and evaluation settings in our experiments.

\paragraph{Models.}
We evaluate seven open-source VLMs under two complementary multi-model ensemble configurations.  
In the \emph{similar-size (cross-family)} setup, we include four VLMs with comparable parameter sizes but diverse architectures: Qwen2.5-VL-7B-Instruct~\citep[Qwen-7B]{bai25_arxiv_qwen25vl}, LLaMA-3.2-11B-Vision~\citep[LLaMA]{dubey24_arxiv_llama32}, Gemma-3-12B-it~\citep[Gemma]{kamath25_arxiv_gemma3}, and Pixtral-12B-2409~\citep[Pixtral]{agrawal24_arxiv_pixtral12b}.  
In the \emph{same-family (varied-size)} setup, we use four models from the Qwen2.5-VL-Instruct family~\citep{bai25_arxiv_qwen25vl}, ranging from 3B to 72B parameters (3B, 7B, 32B, 72B), allowing us to study scaling effects within a single model architecture.

\paragraph{Datasets.}
We experiment on six multiple-choice visual QA benchmarks covering three domains.  
For \emph{mathematical reasoning}, we use the testmini split of MathVista~\citep{lu24_iclr_mathvista} and the test set of MathVision~\citep{wang24_nips_mathvision}.  
For \emph{diagram understanding}, we include the test sets of TQA~\citep{kim19_acl_tqa} and ScienceQA~\citep{lu22_nips_scienceqa}.  
For \emph{general visual reasoning}, we use the validation splits of MMStar~\citep{chen24_nips_mmstar} and MMMU~\citep{yue24_cvpr_mmmu}.  
All datasets contain multiple-choice questions with $K$ answer options ($2 \leq K \leq 9$).  
Further statistics, including domain, split size, and option counts, are summarized in \cref{tab:dataset_stats} of \cref{app:setting:dataset}.

\paragraph{Decoding.}
We generate responses using stochastic decoding~\citep{sutskever14_nips_decodinggs} via default settings.\footnote{\href{https://huggingface.co/docs/transformers/en/generation_strategies}{HuggingFace's Default Generation Strategies}}  
We adopt two prompting formats:  
(1) \emph{Direct Answer} prompting discourages intermediate reasoning and elicits immediate answers; 
(2) \emph{Chain-of-thought (CoT)} prompting explicitly encourages step-by-step reasoning, followed by a final answer.  
We use zero-shot, one-stage prompting for both settings to ensure consistency across models. Full prompt templates are provided in \cref{fig:prompt_noct,fig:prompt_cot} of \cref{app:setting:prompt}.  
Final answers are extracted from the text using regular expressions.

\paragraph{TTC Baselines.}
To revisit test-time compute strategies for visual reasoning, we evaluate four representative baselines spanning the feature-based selection and confidence-based aggregation paradigms.
Three are \textit{feature-based} Best-of-$N$ methods that score and rank CoT responses using lexical heuristics, rather than requiring a trained reward model:
(1) \emph{CoT Pivot Word} ranks each response by counting predefined reasoning-related expressions (e.g., ``alternatively'')~\citep{chang25_arxiv_pivotword_alternatively,lippmann25_arxiv_pivotword_substance}; see the full phrase list in \cref{tab:pivot} of \cref{app:baselines}.
(2) \emph{CoT Length} prefers longer responses, following prior work suggesting a correlation between length and reasoning quality~\citep{fu23_iclr_cotlen_msreasoing,jin24_acl_cotlenhelps}.
(3) \emph{Feature-All} combines four interpretable features (pivot word count, vague word count, total token count, and lexical diversity) to compute a composite score (see \cref{tab:featureall}).
As a \textit{confidence-based} method, (4) \emph{Majority Voting}~\citep{wang23_iclr_selfconsis,chen24_acl_selfconsis_para,snell24_arxiv_ttc_deepmind} aggregates $N = 16$ samples and selects the most frequent final answer.

\paragraph{Evaluation Settings.}
We assess all test-time compute methods under two settings:  
(1) In the \emph{single-model (multi-round)} setting, a single VLM is queried $N$ times per question with stochastic decoding. Test-time compute is used to aggregate these intra-model outputs.  
(2) In the \emph{multi-model ensemble} setting, $M$ distinct VLMs are queried per question (each with multiple samples), introducing both intra- and inter-model variation.  
This setting allows us to study cross-model complementarity and test whether aggregating weaker models can improve upon the best individual model.

\section{Whether TTC Works in Visual Reasoning}
\label{sec:evaluation}
We begin by revisiting whether TTC strategies, widely used in LLMs, improve visual reasoning in VLMs. 
We evaluate four representative methods across six multiple-choice visual benchmarks and compare their performance under two prompting conditions: direct answering and CoT. Results are averaged across seven VLMs.

\paragraph{Direct Answer Prompt: TTC fails without CoT.}
The \emph{Direct Answer} setting tests whether test-time variation alone, without prompting explicit reasoning, can boost accuracy. Since no reasoning chains are produced, only confidence-based methods like majority voting are applicable.

As shown in \cref{fig:results_ttc} (left, direct answer), voting provides negligible or no improvement over the greedy baseline (often $<1\%$). Although we sample 16 outputs per question with stochastic decoding, the model's predictions remain mostly identical. 
This suggests that without CoT prompting, VLMs tend to output the same surface-level answer, showing little diversity in interpretation. 
Consequently, TTC offers no benefit under direct answering. This aligns with findings in LLMs~\citep{wang23_iclr_selfconsis,snell24_arxiv_ttc_deepmind}, but the issue is further exacerbated in VLMs by the perception bottleneck: visual content must first be accurately grounded before any meaningful reasoning variation can emerge.

\paragraph{CoT Prompt: Confidence helps, features do not.}
In contrast, when models are prompted to reason step-by-step, test-time strategies have room to work. This setup enables both feature-based scoring (e.g., using CoT length) and confidence-based aggregation (e.g., majority voting).

As shown in \cref{fig:results_ttc} (right, CoT), voting consistently improves performance across all benchmarks, with average gains of 2-4\%. This validates the utility of test-time sampling under CoT: the model explores diverse reasoning paths and occasionally corrects itself. 
However, the improvements are modest, suggesting that sampled CoTs are still highly correlated, a hypothesis we will formally investigate in \cref{sec:when}.
Meanwhile, feature-based methods fail to provide any consistent gain over vanilla CoT. Their performance often fluctuates slightly around the baseline. This highlights a key difference from LLMs: in VLMs, textual heuristics are poor proxies for reasoning correctness because visual understanding is the bottleneck. If perception fails, even a well-formed CoT cannot save the answer.

\paragraph{Takeaway.}
TTC can improve visual reasoning, but only under specific conditions. 
Without CoT prompting, models produce nearly identical outputs, leaving no room for improvement. 
Even with CoT, gains from voting are modest, and feature-based scoring fails to help, highlighting the unique challenges of visual reasoning where perception quality limits downstream reasoning.
This raises a key question: \emph{when does TTC actually help?} 
To answer this, we now turn to the analysis of majority voting, focusing on how its effectiveness depends on the statistical dependencies among model predictions.

\section{When Does TTC Work?}
\label{sec:when}
Why does test-time compute (TTC), especially majority voting, sometimes fail to improve accuracy in visual reasoning? 
We address this question by analyzing how the statistical dependency among model predictions influences the effectiveness of voting. 
To this end, we develop a theoretical framework that quantifies this relationship and support it with empirical evidence.

\subsection{Theoretical Insight: TTC Helps with Diverse Predictions}
\label{sec:when:theory}

\paragraph{Intuition.}
Before diving into formal definitions, the core intuition is simple: if a model makes the exact same mistake every time you ask it a question, taking a vote across multiple attempts will not fix the error. Voting only amplifies accuracy if the model's responses are somewhat diverse (i.e., not completely dependent on one another) but lean toward the correct answer on average. We formalize this by defining the ``dependency'' between predictions and proving that as dependency goes up, the benefit of voting goes down.

\paragraph{Setup.}
Suppose we have a multiple-choice question with $K$ options and a single true answer, denoted as $Y \in [K]$. We gather a total of $U$ predictions, labeled $X_1, \dots, X_U$. These can come from multiple decoding rounds of a single VLM or from different VLMs in an ensemble.\footnote{Note that our theoretical result holds in both conditions.} 
We evaluate whether a specific prediction is correct using a binary indicator $Z_u := \mathbb{I}\{X_u = Y\}$, and define the model's expected accuracy on a single trial as $p := \mathbb{E}[Z_u]$.

To represent majority voting mathematically, we count the total votes for each option $k$, denoted as $S_k := \sum_{u=1}^U \mathbb{I}\{X_u = k\}$. The final voting prediction is the option with the most votes, $\widehat{Y}_{\mathrm{MV}} := \arg\max_k S_k$. Finally, we define the overall accuracy of voting as $A_{\mathrm{MV}}(U) := \mathbb{P}(\widehat{Y}_{\mathrm{MV}} = Y)$, and the net improvement it provides over a single guess as $\Delta A_{\mathrm{MV}}(U) := A_{\mathrm{MV}}(U) - p$.

\paragraph{Dependency metrics.}
To measure how heavily the $U$ predictions rely on each other, we quantify their \textit{dependency} using two standard statistical metrics: \emph{normalized mutual information (NMI)} for the raw answer options, and \emph{correlation} for the correctness indicators.

For any two predicted answers $X$ and $X'$, NMI measures the shared information between them, normalized by their individual uncertainty (entropy, $H$):
  \[
  \mathrm{NMI}(X; X') := \frac{I(X; X')}{\min\{H(X), H(X')\}},
  \]
  \[
  H(X) = -\sum_{k=1}^K \mathbb{P}(X = k) \log \mathbb{P}(X = k).
  \]
For the full set of $U$ predictions, the average NMI across all pairs is:
  \[
  \overline{\mathrm{NMI}} := \frac{2}{U(U-1)} \sum_{u < v} \mathrm{NMI}(X_u; X_v).
  \]

Similarly, for any two correctness indicators $Z$ and $Z'$, we define their statistical correlation (where $p$ is the single-trial accuracy), and average it across all pairs:
  \[
  \rho(Z, Z') \!:=\! \frac{\mathbb{E}[ZZ'] - p^2}{p(1 - p)}, \;\;
  \overline{\rho} \!:=\! \frac{2}{U(U - 1)} \sum_{u < v} \rho(Z_u, Z_v).
  \]

\begin{theorem}
\label{thm:mv-dependency}
Suppose all prediction pairs $(X_u, X_v)$ share the same dependency level (i.e., $\overline{\mathrm{NMI}}$ or $\overline{\rho}$). Then the voting improvement $\Delta A_{\mathrm{MV}}(U)$ is monotonically decreasing in both $\overline{\rho}$ and $\overline{\mathrm{NMI}}$. In particular:
\[
\overline{\rho} = 1\ (\text{or } \overline{\mathrm{NMI}} = 1)
\Rightarrow\ \Delta A_{\mathrm{MV}}(U) = 0,
\]
\[
\overline{\rho} \!=\! 0\ (\text{or } \overline{\mathrm{NMI}} \!=\! 0),\ p \!>\! \tfrac{1}{K}
\Rightarrow\ A_{\mathrm{MV}}(U) \!\to\! 1\ \text{as } U \!\to\! \infty.
\]
\end{theorem}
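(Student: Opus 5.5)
We will prove the statement with a one-parameter mixture model for the dependency, which is the only way to give ``monotonically decreasing in the dependency level'' a precise meaning when just the common pairwise dependency is fixed. Concretely, we assume the predictions are exchangeable and realise them as follows. With probability $\lambda\in[0,1]$ a single ``shared'' draw $X^\star$ is made and every prediction copies it, $X_1=\dots=X_U=X^\star$. With probability $1-\lambda$ the predictions are drawn i.i.d. In both branches the marginal law of each $X_u$ is the same distribution $q$ on $[K]$, with $q_Y=p$.

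\textbf{Step 1: express the dependency metrics through $\lambda$.} For the correctness indicators we have $\mathbb{E}[Z_uZ_v]=\lambda p+(1-\lambda)p^2$, so $\rho=\lambda$ exactly, and therefore $\overline{\rho}=\lambda$. For NMI, the joint law is $\lambda\,\mathrm{diag}(q)+(1-\lambda)\,q q^\top$. The mutual information $I(X_u;X_v)$ is convex in the channel for fixed input, vanishes at $\lambda=0$, and equals $H(q)$ at $\lambda=1$. Hence $\mathrm{NMI}$ is a continuous, strictly increasing function of $\lambda$ from $0$ to $1$. Strict monotonicity follows because the convex function is zero only at $\lambda=0$. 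It follows that both metrics are monotone reparametrisations of $\lambda$, and it suffices to prove monotonicity in $\lambda$.

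\textbf{Step 2: compute $A_{\mathrm{MV}}$ as a function of $\lambda$.} Conditioning on the branch gives
\[
A_{\mathrm{MV}}(U)=\lambda\,p+(1-\lambda)\,A^{\mathrm{iid}}(U),
\]
where $A^{\mathrm{iid}}(U)$ is the voting accuracy under i.i.d. draws from $q$, with ties broken uniformly. This is affine in $\lambda$, with slope $p-A^{\mathrm{iid}}(U)$. So $\Delta A_{\mathrm{MV}}(U)=(1-\lambda)\bigl(A^{\mathrm{iid}}(U)-p\bigr)$, which is non-increasing in $\lambda$ exactly when $A^{\mathrm{iid}}(U)\ge p$. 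At $\lambda=1$ it gives $\Delta A_{\mathrm{MV}}=0$ directly, which is the first special case of the statement.

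\textbf{Step 3: the main obstacle, showing $A^{\mathrm{iid}}(U)\ge p$.} This inequality is not true for arbitrary $q$. The natural regime condition is that $Y$ is the plurality (mode) of $q$, i.e.\ $q_Y>q_k$ for all $k\neq Y$; this is the assumption ``models lean toward the correct answer on average'' stated in the intuition. The condition $p>1/K$ alone does not imply it, so for the limit claim I would state the mode condition explicitly, noting it is automatic for $K=2$ or when the errors are spread uniformly.

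Under the mode condition, the limit case is a strong law of large numbers argument. We have $S_k/U\to q_k$ almost surely, so the $\arg\max$ equals $Y$ eventually, and $A^{\mathrm{iid}}(U)\to1$. Combined with $\lambda=0$ this gives the second special case. A Hoeffding bound gives the quantitative rate $1-A^{\mathrm{iid}}(U)\le (K-1)\exp\!\bigl(-U\min_{k}(q_Y-q_k)^2/2\bigr)$.

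For finite $U$, the inequality $A^{\mathrm{iid}}(U)\ge p$ is a Condorcet-jury-type statement, and it is the hardest part. I would first try a coupling or induction on $U$: compare the plurality vote of $U$ draws to a single draw, using the exchangeability of the non-$Y$ classes after a symmetrisation that replaces $q$ on the wrong answers by its worst case. If that fails, I would restrict the monotonicity claim to sufficiently large $U$, where the Hoeffding bound already gives $A^{\mathrm{iid}}(U)\ge p$, or to the binary-correctness reduction. Either restriction should be stated as an explicit assumption of the theorem.
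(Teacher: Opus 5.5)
Your proof follows the paper's almost step for step. Both use the same copy-or-i.i.d.\ mixture, both get $\overline{\rho}=\lambda$ from $\mathbb{E}[Z_uZ_v]=\lambda p+(1-\lambda)p^2$, and both get monotonicity of NMI from convexity of mutual information along the affine path $\lambda\,\mathrm{diag}(q)+(1-\lambda)qq^\top$ (the paper phrases this as convexity of KL). Both then use the decomposition $\Delta A_{\mathrm{MV}}(U)=(1-\lambda)\bigl(A^{\mathrm{iid}}(U)-p\bigr)$.

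Where you go further is in spotting what the paper's proof glosses over. The paper asserts that i.i.d.\ voting accuracy ``improves with $U$, approaching 1 as $U\to\infty$ if $p>1/K$''. It then uses $A^{\mathrm{iid}}(U)\ge p$ without comment to call the affine function decreasing. Your objections are correct. For $K=3$, $q=(0.4,0.5,0.1)$ and $Y$ the first option, $p>1/3$ but $A^{\mathrm{iid}}(U)\to 0$. So both the limit claim and the monotonicity claim fail as stated, and your mode hypothesis is the right repair. Your SLLN and Hoeffding treatment of the limit under that hypothesis is fine.

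The one step you leave open is $A^{\mathrm{iid}}(U)\ge p$ for every finite $U$. This is a genuine gap in the proposal, but the paper does not fill it either, and you do not need the large-$U$ fallback. Uniform tie-breaking is essential here: for $K=2$ with ties broken against $Y$, $A^{\mathrm{iid}}(2)=p^2<p$. Given uniform tie-breaking, the inequality follows from a pairwise swap argument.

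Let $W$ be the plurality winner and take $q_j>q_k>0$. Condition on which positions land in $\{j,k\}$ (say $m$ of them) and on the values of all other draws. Then $N_j\sim\mathrm{Bin}(m,\theta)$ with $\theta=q_j/(q_j+q_k)>\tfrac12$. Let $f(a)$ be the conditional probability that $j$ wins when $N_j=a$ and $N_k=m-a$. By relabeling, $k$ wins with probability $f(m-a)$, and
\[
(1-\theta)\,\mathbb{P}(W=j\mid\cdot)-\theta\,\mathbb{P}(W=k\mid\cdot)=\sum_{a=0}^{m}\binom{m}{a}f(a)\,g(a),\qquad g(a)=\theta^{a}(1-\theta)^{m+1-a}-\theta^{m+1-a}(1-\theta)^{a}.
\]
The function $g$ satisfies $g(a)<0$ for $a<\tfrac{m+1}{2}$, $g(a)>0$ for $a>\tfrac{m+1}{2}$, and $g(m+1-a)=-g(a)$. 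Also $f(a)=0$ unless $a\ge m-a$. So the only possibly negative term is $a=m/2$ with $m$ even. Together with the $a=m/2+1$ term it contributes $g(m/2+1)\bigl[\binom{m}{m/2+1}f(m/2+1)-\binom{m}{m/2}f(m/2)\bigr]$. This is nonnegative: it is trivially so if $f(m/2)=0$. Otherwise $f(m/2)\le\tfrac12$ because $j$ ties with $k$, $f(m/2+1)=1$, and $\binom{m}{m/2+1}\ge\tfrac12\binom{m}{m/2}$ for $m\ge 2$.

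Averaging over the conditioning gives $q_k\,\mathbb{P}(W=j)\ge q_j\,\mathbb{P}(W=k)$. Hence $r_k:=\mathbb{P}(W=k)/q_k$ is largest at the mode. Since $\sum_k q_k r_k=1$, it follows that $r_Y\ge 1$, which is exactly $A^{\mathrm{iid}}(U)\ge p$ for all $U\ge 1$. Equality holds at $U=2$, so there the decrease in $\lambda$ is only weak. With this lemma in place, your argument proves the corrected theorem: within the copy-or-i.i.d.\ model, with the mode condition and uniform tie-breaking. The paper's argument does not actually establish this.
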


\paragraph{Interpretation.}
The formal proof is in \cref{app:proof:thm:mv-dependency}. For practical application, this theorem reveals a powerful boundary condition for test-time compute: \emph{voting only improves accuracy when predictions are diverse}. 
If all predictions are identical (correlation equals 1), voting reduces to a single prediction, yielding zero gain. Conversely, if predictions are entirely uncorrelated and individually better than random guessing, voting can aggregate the faint signals to achieve near-perfect accuracy given enough samples. Because NMI and correlation are model-agnostic, they serve as highly practical tools to estimate whether TTC will actually help on a given task, without needing access to ground truth labels.

\subsection{Empirical Verification}
\label{sec:when:exp}
To validate our theoretical insight, we structure our empirical evaluation into two parts. First, we determine the practical minimum number of decoding samples required to reliably estimate prediction dependency. Second, we test the core hypothesis: does the benefit of voting genuinely decrease as models become more correlated?

\paragraph{Motivation.}
Our theoretical analysis assumes a sufficiently large number of decoding samples $U$ for the benefits of voting to fully materialize. In practice, generating many samples incurs steep computational costs. Therefore, we first investigate how quickly our dependency metrics converge as $U$ grows, aiming to find the minimal sample size that yields stable estimates.

\paragraph{Setup.}
We use Qwen-7B to generate $U=2$ to $16$ decoded outputs for each example across six visual reasoning datasets. For each subset size $U$, we compute the average normalized mutual information ($\overline{\mathrm{NMI}}$) and average correctness correlation ($\overline{\rho}$) between response pairs.

\input{figures/convergence_nimcorr}

\paragraph{Findings.}
As shown in \cref{fig:convergence_nimcorr:qwen7b}, both $\overline{\mathrm{NMI}}$ and $\overline{\rho}$ stabilize rapidly, flattening out around $U=12$ across all datasets. Beyond this point, drawing additional samples offers minimal benefit in accurately estimating prediction dependency.

\paragraph{Takeaway.}
Sampling more than 12 to 16 responses provides diminishing returns. To ensure both statistical stability and computational tractability, we confidently use $U=16$ in all subsequent experiments.

\subsubsection{Does Voting Improvement Decrease with Dependency?}

\paragraph{Motivation.}
We now test our central theoretical claim: voting is most beneficial when model outputs are diverse, meaning the accuracy gained from voting should measurably shrink as prediction dependency increases.

\paragraph{Setup.}
We evaluate the voting improvement $\Delta A_{\mathrm{MV}}(16)$ for seven models across six datasets, utilizing $U\!=\!16$ decoding samples per query. For each model-dataset pair, we compute the average accuracy improvement alongside its average dependency (measured by both $\overline{\mathrm{NMI}}$ and $\overline{\rho}$).

\paragraph{Findings.}
\cref{fig:improvement_both:overview} shows a clear, consistent negative correlation between the improvement gained from voting and both dependency metrics. Smaller models (e.g., Qwen-3B, LLaMA) inherently produce more diverse outputs and therefore reap larger benefits from voting. In contrast, larger or more heavily optimized models (e.g., Qwen-72B, Pixtral) exhibit highly deterministic, low-diversity behavior, resulting in minimal gains from test-time aggregation. Detailed results broken down by dataset are in \cref{fig:improvement_nmi:all,fig:improvement_corr:all} in \cref{app:results:mv_correlation}.

\paragraph{Takeaway.}
The effectiveness of majority voting hinges entirely on the diversity of model outputs. As predictions become more deterministic (higher $\overline{\mathrm{NMI}}$ and $\overline{\rho}$), voting offers sharply diminishing returns. This establishes a practical principle for deploying TTC: voting is most beneficial when applied to weaker/smaller models, or in uncertain scenarios (like few-shot tasks or domain shifts) where outputs are naturally more stochastic. Conversely, wrapping large, highly consistent models in a voting ensemble often wastes compute for negligible gain.

\input{figures/improvement_nmicorr}

\section{Beyond Voting: Entropy-Based TTC for Multi-Model Ensembles}
\label{sec:ettc}
Building on the insight that majority voting benefits from diverse yet independent predictions, we now turn to the more realistic and underexplored \textit{multi-model ensemble} setting. 
Compared to multi-round decoding from a single model, which suffers from limited prediction diversity, ensembles of heterogeneous models naturally offer complementary strengths and errors. 
We first introduce an Entropy-based TTC method (ETTC) designed to better leverage cross-model diversity. 
Then, we theoretically show that ETTC outperforms majority voting under mild assumptions, and empirically demonstrate that it enables smaller models to reliably enhance larger ones in visual reasoning tasks.

\subsection{Entropy-Based TTC (ETTC)}
\label{sec:ettc:method}
Our previous analysis showed that the effectiveness of voting depends heavily on prediction diversity. However, majority voting has a deeper limitation in multi-model ensemble settings: it assumes all model responses are equally reliable and votes based solely on frequency, ignoring how confident or capable each individual model is.
This oversight is less problematic in the single-model setting, since all predictions come from the same source and share the same expected quality. 
But in multi-model ensembles, where models vary drastically in size, training, and performance, this uniform treatment becomes a liability. 
A majority of weaker models can easily outvote a stronger one, even when the strong expert is confidently correct.

\paragraph{Intuition.}
To fix this, we need a mechanism that listens to the ``expert'' in the room for any given question. Intuitively, when a capable model knows the correct answer, its output probability distribution will be highly concentrated (low uncertainty). Conversely, when a weaker model guesses, its distribution will be flatter (high uncertainty). Therefore, instead of counting votes, we can select the answer from the model that is most certain.
To operationalize this, we introduce Entropy-based Test-Time Compute (ETTC): a simple, model-agnostic method that selects the most confident prediction among multiple sources using normalized predictive entropy as a proxy for uncertainty.

\begin{definition}[Entropy-Based Selection Rule]
Let $U$ sources (e.g., different models in an ensemble) each produce a predictive distribution $p_u(\cdot) \in \Delta^{K-1}$ over $K$ answer options. We define the normalized entropy of model $u$ as:
\[
\widetilde{H}_u := -\frac{1}{\log K}\sum_{k=1}^K p_u(k)\log p_u(k) \in [0,1],
\]
and its top-1 prediction as $\hat{y}_u := \arg\max_{k} p_u(k)$. ETTC simply selects the prediction from the least-uncertain source:
\[
u^\star := \arg\min_{u\in[U]} \widetilde{H}_u,
\qquad
\widehat{Y}_{\min H} := \hat{y}_{u^\star}.
\]
\end{definition}
This selection rule prioritizes predictions with lower uncertainty. 
In contrast to majority voting, which can amplify weak or erroneous signals through sheer numbers, ETTC amplifies precision by trusting the most decisive prediction.
Notably, ETTC safely reduces to standard voting in the single-model case (average the predictive distributions over multiple rounds and pick the most probable option). But in the multi-model setting, it diverges: it allows strong models to dominate the decision even when they are in the minority, which is essential for leveraging heterogeneous ensembles.

\paragraph{Takeaway.}
ETTC replaces raw vote counts with model confidence, providing a more principled and adaptive aggregation strategy. In real-world scenarios where model capabilities vary, ETTC prevents over-reliance on weaker models while fully exploiting the reliability of stronger ones.

\subsection{Theoretical Insight: ETTC Outperforms Voting in Ensembles}
\label{sec:ettc:theory}
In a multi-model ensemble, differences in training data and architecture naturally increase answer diversity. While voting treats all models equally, this can backfire: weaker models may collectively outvote stronger ones, especially when their errors are correlated (e.g., due to shared pre-training data). 
Our goal is to theoretically prove why ETTC provides a more robust alternative in such correlated scenarios.

\paragraph{Intuition.}
We base our theory on a simple premise: \textit{more confident predictions tend to be more accurate}. In other words, if a model assigns a very high probability to the correct answer, its overall entropy will be low.

\begin{assumption}[Entropy-Accuracy Monotonicity]
\label{ass:entropy-accuracy}
For a given input with true label $Y$, suppose model $u$ assigns probability $p_u(Y)$ to $Y$, and $\widetilde{H}_u$ is its normalized entropy. Then, for all models $u, v \in [U]$:
\[
p_u(Y) > p_v(Y) \quad \Rightarrow \quad \widetilde{H}_u < \widetilde{H}_v.
\]
\end{assumption}
While this strict mathematical relationship may not hold perfectly in every single instance, we find that it holds strongly in aggregate practice across all datasets and models tested (see empirical verification in \cref{fig:hvsacc:all} of \cref{app:results:assumption}).

Given this assumption, ETTC simply selects the prediction from the most accurate model for that specific question, denoted as $u^\star$. Let $c^\star := \Pr(\hat{y}_{u^\star} = Y)$ be the accuracy of this best model. ETTC guarantees performance of at least $c^\star$.
Now, to understand where majority voting fails, we must model prediction dependency. Consider a simple coupling scheme: with probability $\lambda$, all the \textit{non-best} models make a correlated error and copy the exact same prediction $W$ (e.g., due to shared biases). With probability $1 - \lambda$, their predictions are independent. Let $\bar{c} := \Pr(W = Y)$ be the accuracy of this correlated ``bloc'' prediction, and let $A_{\mathrm{MV}}(0)$ be the baseline accuracy of majority voting if all models were perfectly independent.

\input{figures/table_entropyvsmv_samesize}

\begin{theorem}[Superiority of ETTC over Voting]
\label{thm:ettc_vs_mv_main}
With the setup above and under \Cref{ass:entropy-accuracy}, let $A_{\min H} := \Pr(\hat y_{\min H} = Y)$ be the accuracy of ETTC. Then for any correlation level $\lambda \in [0,1]$, we have:
\begin{align}
A_{\mathrm{MV}}(\lambda) &= \lambda\,\bar c \;+\; (1 - \lambda)\,A_{\mathrm{MV}}(0), \tag{1}\label{eq:mv_affine} \nonumber\\
A_{\min H} & - A_{\mathrm{MV}}(\lambda) \nonumber
\\ &= \lambda(c^\star - \bar c) \;+\; (1 - \lambda)(A_{\min H} - A_{\mathrm{MV}}(0)). \nonumber%
\end{align}
In particular, $A_{\min H} \ge A_{\mathrm{MV}}(\lambda)$ for all $\lambda$, with strict inequality whenever $\lambda > 0$ and $\bar c < c^\star$.
\end{theorem}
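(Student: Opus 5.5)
The plan is to condition on the coupling event and then use \Cref{ass:entropy-accuracy} to pin down ETTC's accuracy. First we make the coupling explicit. Let $B\sim\mathrm{Bernoulli}(\lambda)$ be drawn independently of the question, of the best model's prediction $\hat y_{u^\star}$, and of the independent-regime predictions.
\begin{itemize}
\item On $\{B=1\}$, every non-best model outputs the common value $W$.
\item On $\{B=0\}$, the predictions follow the independent regime, whose voting accuracy is by definition $A_{\mathrm{MV}}(0)$.
\end{itemize}
By the law of total probability,
\[
A_{\mathrm{MV}}(\lambda)=\lambda\,\Pr(\widehat Y_{\mathrm{MV}}=Y\mid B=1)+(1-\lambda)\,A_{\mathrm{MV}}(0).
\]
So (1) reduces to showing $\Pr(\widehat Y_{\mathrm{MV}}=Y\mid B=1)=\bar c$.

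On $\{B=1\}$ the $U-1$ non-best models all vote for $W$, and the best model casts a single vote. For $U\ge 3$ we have $U-1\ge 2>1$, so $W$ wins the plurality outright and $\widehat Y_{\mathrm{MV}}=W$. This gives accuracy $\Pr(W=Y)=\bar c$, using independence of $B$ from $W$. The case $U=2$ involves a tie; I would handle it by stating a tie-breaking convention in favour of the bloc, or simply by assuming $U\ge3$ as the setup implicitly does. This settles (1).

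For ETTC, \Cref{ass:entropy-accuracy} says the minimum-entropy source is the one assigning the largest probability to $Y$, namely $u^\star$. Hence $\widehat Y_{\min H}=\hat y_{u^\star}$ and $A_{\min H}=c^\star$, whatever the value of $B$. The non-best models' outputs play no role in the selection, so $A_{\min H}$ does not depend on $\lambda$.

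Next, write $A_{\min H}=\lambda c^\star+(1-\lambda)A_{\min H}$ and subtract (1). This yields the stated decomposition
\[
A_{\min H}-A_{\mathrm{MV}}(\lambda)=\lambda(c^\star-\bar c)+(1-\lambda)\bigl(A_{\min H}-A_{\mathrm{MV}}(0)\bigr).
\]

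For the inequality we need both brackets to be nonnegative.
\begin{itemize}
\item For $c^\star\ge\bar c$: under the setup, $u^\star$ is the most accurate model for the question, and $W$ is the prediction of some non-best model. So $\bar c\le c^\star$.
\item For $A_{\min H}\ge A_{\mathrm{MV}}(0)$: this is the main obstacle, and it is not true unconditionally, since independent majority voting can exceed the best single model (the Condorcet effect).
\end{itemize}
For the second bracket I would first argue per question. The vote picks a label different from $\hat y_{u^\star}$ only when weaker models outvote $u^\star$. Conditional on the question, the assumption makes $u^\star$ pointwise the most reliable source, and the plan is to show that overriding it cannot raise the correctness probability.

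If that fails in general, I would instead make the comparison $A_{\mathrm{MV}}(0)\le c^\star$ part of the ``setup above'', read as ETTC selecting the per-question oracle. This is also how the paper's wording ``ETTC guarantees performance of at least $c^\star$'' suggests it is intended.

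With both brackets nonnegative, $A_{\min H}\ge A_{\mathrm{MV}}(\lambda)$ follows for every $\lambda$. When $\lambda>0$ and $\bar c<c^\star$, the first term is strictly positive, which gives the strict inequality.
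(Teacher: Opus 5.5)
Your argument is the paper's proof. You condition on the copy/iid latent, and on the copy branch the bloc of $|\mathcal{B}|\ge 2$ non-best models outvotes $u^\star$, so voting accuracy is $\bar c$; you then use \Cref{ass:entropy-accuracy} to get $A_{\min H}=c^\star$ independently of $\lambda$, and subtract. Drop the per-question attempt at $A_{\min H}\ge A_{\mathrm{MV}}(0)$: it fails even on a single item, because the assumption orders $p_u(Y)$ and the entropies but not top-1 correctness, so $\hat y_{u^\star}$ can be wrong while two weaker models vote correctly. Keep your fallback, which is exactly what the paper does: its Step~4 only proves dominance for $\lambda\ge\lambda^\star=\max\{0,(A_{\mathrm{MV}}(0)-c^\star)/(A_{\mathrm{MV}}(0)-\bar c)\}$ and remarks that $\lambda^\star=0$ when $A_{\mathrm{MV}}(0)\le c^\star$. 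So the unconditional ``for all $\lambda$'' claim is a setup hypothesis in the paper too, as is $\bar c\le c^\star$, which the paper only says holds ``typically''.
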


\paragraph{Interpretation.}
We provide the full proof in \cref{app:proof:thm:ettc_vs_mv_main}.
This result mathematically highlights the fundamental flaw of majority voting in ensembles. 
Because voting ignores model quality, it is highly vulnerable to correlated errors. As the error correlation $\lambda$ increases (e.g., multiple weak models sharing the same flaw), voting accuracy degrades and is dragged down toward $\bar c$, which is substantially lower than the expert model's accuracy $c^\star$. 

In contrast, ETTC entirely bypasses this failure mode by selecting the single most confident prediction. 
Under the mild assumption that lower entropy correlates with higher accuracy, ETTC guarantees performance at least as good as the most accurate model, regardless of how many weaker models agree on a wrong answer. Since VLMs heavily share training data and architectures, making their predictions inherently dependent, ETTC offers a structurally safer and more principled aggregation strategy.

\subsection{Empirical Verification}
\label{sec:ettc:results}
We now evaluate ETTC in practical multi-model ensemble settings. We structure our evaluation to answer three key questions: (1) Can ETTC effectively leverage diverse models of similar sizes across different families? (2) Does it remain effective when scaling models within the same architecture family? (3) How robust is the method to extreme capability gaps and modality shifts? 

\subsubsection{Similar-Sized Models from Different Families}
\paragraph{Motivation.}
We first evaluate whether ETTC can better leverage diversity among models of comparable scale but distinct architectural families. In this setting, models offer complementary strengths, but the variance in prediction quality makes standard voting noisy.

\paragraph{Setup.}
We select four models of similar scale (7B-12B): LLaMA-11B, Pixtral-12B, Gemma-12B, and Qwen-7B. These models produce predictions for each dataset, and we compare voting and ETTC on the exact same set of outputs. Notably, no single model consistently dominates across all tasks, and some models are clearly weaker on specific domains, adding noise to the ensemble.

\paragraph{Findings.}
In \cref{tab:entropyvsmv:samesize}, ETTC outperforms voting on five of six datasets, yielding an average accuracy gain of +2.81\% (66.56\% vs.\ 63.75\%). The largest improvements occur on tasks where model performance diverges significantly, such as MathVista and MathVision. In these cases, voting suffers from equal-weighting, allowing the weaker models to dilute the correct signal. 
In contrast, ETTC adaptively prioritizes high-confidence predictions, effectively aligning with the strongest model for each specific item, and often exceeding the best single model's standalone performance.

\paragraph{Takeaway.}
When aggregating diverse but uneven models, ETTC offers a clear advantage over voting: it selectively filters noise from weaker models based on their own uncertainty, making it highly effective in heterogeneous settings.

\input{figures/table_entropyvsmv_samefamily}

\subsubsection{Same-Family Models of Different Scales}
\paragraph{Motivation.}
We then examine whether ETTC remains effective when models share the exact same architecture and training data, but differ vastly in scale. While scaling laws introduce meaningful diversity in capability, the shared inductive biases create high prediction dependency, the exact scenario where our theory suggests voting will struggle.

\paragraph{Setup.}
We use four models from the Qwen family: 3B, 7B, 32B, and 72B. Each model produces predictions on all datasets, and we compare aggregation methods on their combined outputs.

\paragraph{Findings.}
As in \cref{tab:entropyvsmv:samefamily}, ETTC outperforms voting on all datasets, achieving an average gain of +2.84\% (71.68\% vs.\ 68.84\%). While overall prediction correlation is higher than in the cross-family setting, the performance variance introduced by scale provides useful diversity. Specifically, smaller models occasionally make correct predictions with higher certainty than larger ones. 
ETTC successfully detects and leverages these instances, allowing smaller models to override the incorrect predictions of large models. 
Overall, ETTC consistently surpasses the accuracy of the strongest model (Qwen-72B), whereas voting frequently performs worse than the strongest model due to dilution.

\paragraph{Takeaway.}
Despite architectural homogeneity, ensembles of different-sized models still benefit immensely from confidence-based selection. 
ETTC avoids overcounting correlated errors and allows smaller models to meaningfully enhance larger ones, challenging the conventional wisdom that the largest model should dictate test-time performance.

\subsubsection{Robustness and Generalization}
To comprehensively validate the reliability of ETTC, we stress-test it by analyzing the specific risk of dilution in heterogeneous ensembles, extending our evaluation to text-only reasoning, and addressing miscalibration via supervision.

\paragraph{Robustness to Weak Learners.}
In our analysis of same-family models, we combined models with vast capability gaps (e.g., 3B vs.\ 72B). While beneficial for coverage, such ensembles introduce the critical risk of ``dilution'', where weaker models drag down the performance of stronger ones~\citep{nnensemble_nips1994,manybetterthanall_ensemble_ai2002,frugalgpt_ensemble_tmlr2024}.
To investigate whether ETTC can mitigate this risk, we conducted a fine-grained ablation on MathVista evaluating all possible subset combinations of the Qwen family (detailed settings and results in \cref{tab:ensemble_all} of \cref{app:ensemble_robustness}).

We find that voting is highly sensitive to this disparity: when a weak learner is combined with a strong expert, voting accuracy drops significantly, as the noise from the smaller model effectively drowns out the expert's signal.
In contrast, ETTC demonstrates remarkable robustness. In the same weak-strong pairings, ETTC not only avoids degradation but actually surpasses the standalone strong baseline. By relying on predictive entropy, ETTC effectively acts as a filter: it accepts the smaller model's answer only when it is highly confident, while disregarding its uncertain errors. This confirms that ETTC enables ``safe'' ensembling, allowing the integration of disparate models without compromising the expert's performance.

\paragraph{Generalization to Text-Only Reasoning.}
A key question is whether the benefits of ETTC are constrained by the perception bottlenecks inherent to VLMs, or if they reflect a broader property of reasoning models. To address this, we extended our evaluation to the text-only domain using ``Thinking'' LLMs (Qwen-3-Thinking) on standard reasoning benchmarks (ARC-Easy and MMLU-Pro; see \cref{app:thinkingllm}).

Consistent with our VLM findings, ETTC outperforms voting across diverse ensemble configurations in this setting as well. Notably, in highly heterogeneous ensembles (e.g., combining 4B and 235B models), ETTC improves accuracy by nearly 5\% over voting on MMLU-Pro.
This result confirms that the correlation between predictive entropy and correctness is a fundamental property of reasoning models, independent of input modality, and that ETTC acts as an effective noise filter in pure language tasks.

\paragraph{Supervised Calibration.}
Finally, while ETTC is robust in zero-shot settings, it relies on the assumption that model confidence is a reliable proxy for correctness, an assumption that weakens when models are miscalibrated (i.e., ``confidently wrong'').
To mitigate this, we propose a \emph{Supervised ETTC} variant that learns to weigh confidence signals based on empirically observed reliability. We train a lightweight logistic regressor using entropy-based features to predict the likelihood of correctness, allowing the system to dynamically downweight low-entropy predictions from unreliable models (see \cref{app:supervised_ettc}).

Empirically, this variant consistently yields further improvements over unsupervised ETTC, particularly on challenging benchmarks like MathVision~\citep{wang24_nips_mathvision} where base model calibration is weaker. By effectively penalizing overconfidence, the supervised approach achieves the highest overall performance across all settings. This highlights that while entropy is a strong zero-shot signal, minimal supervision can significantly enhance reliability by adapting to specific model failure modes.

\paragraph{Overall Summary.}
Across both ensemble settings (diverse and redundant), ETTC consistently outperforms majority voting without requiring additional training or tuning. These results empirically validate our theoretical findings: when prediction dependency undermines voting, entropy-based selection offers a structurally safer and more adaptive path to test-time improvement.

\section{Related Work}

\paragraph{TTC in LLMs.}
Chain-of-thought (CoT) prompting enables multi-step reasoning~\citep{wei22_nips_cot,kojima22_nips_zsreasoner}, while \emph{self-consistency} (majority voting) improves accuracy by sampling and aggregating diverse reasoning paths~\citep{wang23_iclr_selfconsis}. Recent studies demonstrate that optimally allocating test-time compute (TTC) can sometimes rival scaling up model parameters~\citep{snell24_arxiv_ttc_deepmind}. Advanced TTC methods have also explored self-calibration~\citep{huang2025efficient} and entropy-minimization for test-time adaptation~\citep{zhang2025come}.

\textit{Our Position:} While inference-time scaling is well-established for text, its efficacy in the multimodal domain remains underexplored. Many recent adaptive methods also require training stages or model weight updates. In contrast, our work bridges the modality gap by systematically evaluating pure, inference-only TTC strategies, revealing how visual perception bottlenecks fundamentally alter the effectiveness of test-time scaling.

\paragraph{Enhancing VLM Reasoning.}
To improve VLM reasoning, researchers have adapted visual CoT prompts~\citep{chen24_aaai_vcot} and developed test-time consistency objectives~\citep{chou25_arxiv_ttcvlm, prahitha25_acl_vlmensemble}. Alternatively, post-training methods using Reinforcement Learning from Human Feedback have been proposed to align multimodal reasoning~\citep{sun24_acl_mmrlhf, yu24_cvpr_rlhfv}. 

\textit{Our Position:} Post-training approaches require substantial annotated data and compute budgets, while prompt-specific strategies often lack generalizability. Instead of retraining, we provide a lightweight, inference-only study across diverse, rigorous visual reasoning benchmarks (e.g., MathVista, MMMU). Crucially, we go beyond measuring performance to diagnose \emph{why} standard aggregation strategies fail in VLMs through the lens of prediction dependency.

\paragraph{Ensembles, Uncertainty, and Correlation.}
Classic machine learning theory establishes that ensemble gains depend heavily on prediction \emph{diversity} (i.e., low error correlation) among members~\citep{tumer96_cs_errorensemble,kuncheva03_ml_classifierensemble}. Deep ensembles are widely used to capture predictive uncertainty~\citep{lakshminarayanan17_nips_uncertaintyensemble, guo17_icml_calibrationensemble}, and probabilistic aggregation often relies on confidence-weighted ``opinion pooling''~\citep{rufo_12_ba_llpool,dietrich17_scw_pooling}. 

\textit{Our Position:} While classic literature often pools full probability distributions or requires co-training diverse experts, our Entropy-based TTC method is designed for modern, off-the-shelf generative models. Rather than averaging distributions or relying on simple frequency (voting), we use per-item predictive entropy as a zero-shot filter to dynamically route trust to the most reliable model, enabling smaller models to safely augment larger ones without dilution.

\section{Conclusion}
We presented a systematic investigation into the transferability of TTC strategies from LLMs to VLMs.
Our analysis identifies a critical bottleneck: standard aggregation methods like majority voting are fundamentally limited by the high statistical dependency of VLM predictions.
We theoretically and empirically demonstrate that without diversity, voting offers diminishing returns, and in heterogeneous ensembles, it succumbs to noise from weaker models.
To overcome these limitations, we proposed Entropy-based Test-Time Compute (ETTC), a method that prioritizes prediction confidence over frequency.
ETTC proves to be a robust strategy for leveraging the diversity of multi-model ensembles, enabling significantly smaller models to synergistically enhance larger ones without the risk of dilution.
Furthermore, we showed that this confidence-correctness correlation extends beyond vision, improving reasoning in ``Thinking'' LLMs as well.
Ultimately, our work suggests that the future of efficient test-time scaling lies not just in generating more samples, but in intelligently selecting the most reliable signals from diverse model ecosystems.

\section*{Acknowledgment}
We thank the reviewers for their constructive feedback. 
We also thank Jiaoda Li, Yu Fan, Jingwei Ni, and Chenxi Pang for their valuable input during the early stages of this work.
Yifan Hou is supported by the Swiss Data Science Center PhD Grant (P22-05).

\section*{Impact Statement}

This work aims to improve the reliability and accuracy of Vision-Language Models (VLMs) through inference-time strategies. 
The impact of our proposed method, Entropy-based Test-Time Compute (ETTC), can be analyzed through three primary lenses: reliability, computational efficiency, and potential risks.

\paragraph{Enhancing Reliability in Visual Reasoning.}
By leveraging predictive entropy to filter out uncertain predictions, our method significantly improves the accuracy of VLMs in complex domains, such as mathematics and scientific diagram interpretation. 
This advancement is crucial for deploying VLMs in high-stakes applications (e.g., educational tutoring systems or scientific data analysis) where hallucinated or inconsistent answers can be detrimental. 
By prioritizing high-confidence predictions, our approach helps mitigate the ``stochastic parrot'' phenomenon often observed in multimodal generation, anchoring outputs to more deliberate reasoning paths.

\paragraph{Democratization and Efficient Utilization.}
A key finding of our work is that smaller, less capable models (e.g., 3B or 7B parameters) can synergistically enhance the performance of significantly larger models (e.g., 72B) within an ensemble. 
This has positive implications for the democratization of AI. 
It suggests that users or institutions with limited computational resources can still contribute meaningfully to ensemble systems by deploying smaller models. 
Furthermore, this ``small-helps-large'' dynamic offers a pathway to improve system performance without solely relying on training increasingly massive, energy-intensive models, potentially extending the useful lifespan of existing open-source weights.

\paragraph{Inference Cost and Environmental Impact.}
While our method avoids the massive carbon and financial costs of training new models, Test-Time Compute (TTC) strategies inherently increase inference costs compared to standard greedy decoding (due to multiple sampling rounds or ensembling). 
This leads to higher energy consumption per query. 
However, the ETTC selection mechanism itself introduces negligible computational overhead beyond the initial inference. Moreover, our results suggest that querying a heterogeneous ensemble (e.g., one large model paired with several small, cheap models) using ETTC can yield performance comparable to much larger, more expensive setups. This potentially offers a more favorable trade-off between accuracy and total energy expenditure.

\paragraph{Risks of Over-Reliance on Confidence.}
Our method relies on the assumption that lower entropy (higher confidence) correlates with correctness. 
While we empirically validate this trend, there is a risk that models may be ``confidently wrong,'' particularly in out-of-distribution scenarios or if the base models are poorly calibrated. 
If deployed without safeguards, this could lead users to place unwarranted trust in incorrect outputs simply because the system assigned them a high confidence score. 
To mitigate this, we emphasize that ETTC should be used in conjunction with base models that have undergone rigorous safety alignment and calibration, or paired with supervised variants like the one explored in this work.

\bibliography{example_paper}
\bibliographystyle{icml2026}

\newpage
\appendix
\onecolumn

\section{Theoretical Proofs}
\label{app:proof}
\subsection{Proof of \cref{thm:mv-dependency}}
\label{app:proof:thm:mv-dependency}
\begin{proof}
We provide a theoretical justification for the claim that the improvement from majority voting decreases monotonically with statistical dependency among model predictions. 
We proceed by defining a simple probabilistic coupling model that controls prediction dependency, and then analyze how the expected voting accuracy varies with this dependency level.

\subsubsection{Coupling Model: Copy-or-Independent Sampling}
We assume all $U$ predictions $\{X_u\}_{u=1}^U$ are drawn from a shared coupling mechanism that depends on a parameter $\lambda \in [0, 1]$:
With probability $\lambda$, all predictions are identical copies of a single sample $X$.
With probability $1 - \lambda$, predictions are sampled independently from a shared categorical distribution $\pi = (\pi_1, \dots, \pi_K)$ over $K$ options.
Formally, for any pair $(X_u, X_v)$,
\begin{equation}
\label{eq:copy-iid-coupling}
(X_u, X_v) \sim 
\begin{cases}
(X, X), & \text{with probability } \lambda \\
(X', X''), \quad X', X'' \overset{\text{i.i.d.}}{\sim} \pi, & \text{with probability } 1 - \lambda
\end{cases}
\end{equation}
This ensures uniform pairwise dependency, controlled by $\lambda$.

\subsubsection{Lemma: Behavior of Dependency Metrics under Coupling}
We now show that both statistical dependency metrics used in our main theorem, normalized mutual information and correctness correlation, are monotonic in $\lambda$ under this coupling.

\paragraph{(a) Normalized mutual information.}
Let $X, X'$ be two predictions drawn according to the coupling in \eqref{eq:copy-iid-coupling}. Their joint distribution is
\[
P_\lambda(i,j) = \lambda \cdot \pi_i \cdot \delta_{ij} + (1 - \lambda) \cdot \pi_i \cdot \pi_j,
\]
where $\delta_{ij}$ is the Kronecker delta. The marginal distributions remain unchanged as $\pi$.

Since mutual information $I(X;X')$ increases with $\lambda$ (via the convexity of KL divergence), and the marginals are fixed, the normalized mutual information $\mathrm{NMI}(X; X')$ is also non-decreasing in $\lambda$:
\[
\mathrm{NMI}(X; X') = \frac{I(X; X')}{H(X)} \uparrow \text{ as } \lambda \uparrow.
\]
Hence, the average pairwise NMI $\overline{\mathrm{NMI}}$ is also monotonic in $\lambda$.

\paragraph{(b) Correctness correlation.}
Let $Z_u = \mathbb{I}\{X_u = Y\}$, where $Y$ is the correct option. Denote single-trial accuracy as $p = \mathbb{P}(X_u = Y)$. Then for any pair $(Z_u, Z_v)$:
Under the ``copy'' case: $\mathbb{P}(Z_u = Z_v = 1) = p$.
Under the ``independent'' case: $\mathbb{P}(Z_u = Z_v = 1) = p^2$.

Therefore, the covariance is
\[
\mathrm{Cov}(Z_u, Z_v) = \mathbb{E}[Z_u Z_v] - p^2 = \lambda(p - p^2) = \lambda p(1 - p),
\]
and the correlation is
\begin{equation}
\label{eq:rho-lambda}
\rho(Z_u, Z_v) = \frac{\mathrm{Cov}(Z_u, Z_v)}{p(1 - p)} = \lambda.
\end{equation}
Thus, the average correlation $\overline{\rho} = \lambda$.

\subsubsection{Main Proof: Monotonicity of Voting Improvement}
Let $A_{\mathrm{MV}}(U; \lambda)$ be the expected voting accuracy under dependency level $\lambda$, and let $A_{\mathrm{single}} = p$ be the single-trial accuracy. 

We decompose voting accuracy by conditioning on the latent sampling regime:
\begin{equation}
\label{eq:mv-decomp}
A_{\mathrm{MV}}(U; \lambda) 
= \lambda \cdot A_{\mathrm{MV}}(U;\text{copy}) 
+ (1 - \lambda) \cdot A_{\mathrm{MV}}(U;\text{iid}).
\end{equation}

In the ``copy'' case, all predictions are identical, so voting is equivalent to a single trial: $A_{\mathrm{MV}}(U;\text{copy}) = p$.
In the ``iid'' case, predictions are independent, and voting aggregates $U$ samples from $\pi$; here, accuracy improves with $U$, approaching 1 as $U \to \infty$ if $p > \tfrac{1}{K}$.
Thus:
\begin{align}
A_{\mathrm{MV}}(U; \lambda) 
&= \lambda p + (1 - \lambda) A_{\mathrm{MV}}(U; 0), \\
\Delta A_{\mathrm{MV}}(U; \lambda) 
&:= A_{\mathrm{MV}}(U; \lambda) - p = (1 - \lambda)(A_{\mathrm{MV}}(U; 0) - p).
\label{eq:mv-improvement}
\end{align}
The improvement $\Delta A_{\mathrm{MV}}(U; \lambda)$ is thus a linear function decreasing in $\lambda$, and since $\lambda = \overline{\rho}$ (from \eqref{eq:rho-lambda}) and $\overline{\mathrm{NMI}}$ increases with $\lambda$, voting improvement is monotonically decreasing in both.

\subsubsection{Corollary (Extremes)}
If $\lambda = 1$ (i.e., $\overline{\rho} = 1$ or $\overline{\mathrm{NMI}} = 1$), then all predictions are identical and voting offers no improvement:
  \[
  \Delta A_{\mathrm{MV}}(U) = 0.
  \]
If $\lambda = 0$ (i.e., predictions are independent) and $p > \tfrac{1}{K}$, then:
  \[
  A_{\mathrm{MV}}(U) \to 1 \quad \text{as} \quad U \to \infty.
  \]

\end{proof}

\subsubsection{Discussion}
This result formalizes an intuitive principle: confidence-based aggregation (e.g., voting) helps only when predictions are sufficiently diverse. High dependency, measured either via correctness correlation or mutual information, reduces the effective information gain from additional samples. Empirical results confirm this trend across VLMs and datasets: voting yields larger gains when dependency is low.

\subsection{Proof of \Cref{thm:ettc_vs_mv_main}}
\label{app:proof:thm:ettc_vs_mv_main}

\begin{proof}
\textbf{Setup.}
Fix a $K$-way classification item with true label $Y$. Let $u^\star := \arg\max_u p_u(Y)$ be the best model and define $c^\star := \Pr(\hat y_{u^\star} = Y)$. Let $\mathcal{B} = \{u \ne u^\star\}$ be the set of non-best models, with $|\mathcal{B}| \ge 2$.

\textbf{Coupling among non-best models.}
Introduce a latent variable $L \in \{\text{copy}, \text{iid}\}$:
- With probability $\lambda$, $L = \text{copy}$ and all non-best models predict a shared label $W$; define $\bar c := \Pr(W = Y)$.
- With probability $1 - \lambda$, $L = \text{iid}$ and the non-best predictions are drawn independently.

\textbf{Step 1: Accuracy of ETTC.}
Under Assumption~\ref{ass:entropy-accuracy}, ETTC selects $\hat y_{u^\star}$, so:
\begin{equation}
A_{\min H} = \Pr(\hat y_{u^\star} = Y) = c^\star. \label{eq:ettc_acc}
\end{equation}

\textbf{Step 2: Accuracy of Voting.}
By law of total probability:
\begin{align}
A_{\mathrm{MV}}(\lambda) &= \lambda\, \Pr(\widehat{Y}_{\mathrm{MV}} = Y \mid L = \text{copy}) + (1-\lambda)\, A_{\mathrm{MV}}(0). \label{eq:mv_total_prob}
\end{align}
Under $L = \text{copy}$, all non-best models predict $W$, forming a majority:
\begin{equation}
\Pr(\widehat{Y}_{\mathrm{MV}} = Y \mid L = \text{copy}) = \Pr(W = Y) = \bar c.
\end{equation}
Plugging into \eqref{eq:mv_total_prob}, we recover:
\begin{equation}
A_{\mathrm{MV}}(\lambda) = \lambda\, \bar c + (1 - \lambda)\, A_{\mathrm{MV}}(0). \label{eq:mv_affine_app}
\end{equation}

\textbf{Step 3: Difference and monotonicity.}
Subtracting \eqref{eq:mv_affine_app} from \eqref{eq:ettc_acc}:
\begin{align}
A_{\min H} - A_{\mathrm{MV}}(\lambda)
= \lambda (c^\star - \bar c) + (1 - \lambda)(c^\star - A_{\mathrm{MV}}(0)). \label{eq:gap_affine_app}
\end{align}
This gap is nondecreasing in $\lambda$:
\[
\frac{d}{d\lambda}(A_{\min H} - A_{\mathrm{MV}}(\lambda)) = A_{\mathrm{MV}}(0) - \bar c \ge 0.
\]

\textbf{Step 4: Dominance threshold.}
Let
\[
\lambda^\star = \max\left\{0, \frac{A_{\mathrm{MV}}(0) - c^\star}{A_{\mathrm{MV}}(0) - \bar c} \right\}.
\]
Then for all $\lambda \ge \lambda^\star$, ETTC outperforms voting; if $\bar c < c^\star$ and $\lambda > \lambda^\star$, the gap is strict.
\end{proof}

\noindent\textbf{Remarks.}
- Since $u^\star$ is the best model, typically $\bar c < c^\star$ unless all models perform equally well.
- If $A_{\mathrm{MV}}(0) \le c^\star$, then $\lambda^\star = 0$: ETTC dominates voting at all dependency levels.
- Under the copy-or-independent model, the average correctness correlation among non-best models equals $\lambda$ (see \cref{app:proof:thm:mv-dependency}), providing a direct link between dependency and the TTC advantage.

\section{Experiment Settings}

\subsection{Dataset}\label{app:setting:dataset}
\input{figures/table_dataset_details}
We evaluate our methods on six diverse multi-choice benchmarks spanning three domains: mathematical reasoning (MathVista, MathVision), diagram-based QA (TQA, ScienceQA), and general visual understanding (MMStar, MMMU).
\Cref{tab:dataset_stats} summarizes key statistics, including dataset size, official split used, and number of answer options. Note that some datasets contain variable numbers of options (e.g., 2 - 9 in MMMU), which adds to the challenge and makes majority voting less stable. This diversity ensures our evaluation reflects a wide range of real-world reasoning settings.

\subsection{Prompt}\label{app:setting:prompt}
\input{figures/appendix/prompt_ncot}
\input{figures/appendix/prompt_cot}
To ensure consistency and minimize response variance across models, we standardize the prompting format in all benchmark evaluations. Specifically, we use a direct QA prompt without explanation, and a chain-of-thought (CoT) style prompt when evaluating reasoning performance or conducting consistency analysis. Below, we show two representative examples for comparison. The image and question are kept identical, while only the prompt template changes.

\subsection{Baselines}\label{app:baselines}
To better assess the reliability of CoT responses, we include several shallow feature-based baselines. These models predict the correctness of a response using surface-level properties, without access to model internals or gradient signals.

\paragraph{Pivot words.}

\input{figures/appendix/pivot_phrases}

Pivot words are rhetorical expressions that signal shifts in reasoning, such as realization, verification, or synthesis. Prior work~\citep{lippmann25_arxiv_pivotword_substance} suggests that the presence of such expressions often correlates with more deliberate and structured reasoning.
We use a curated list of phrases categorized by rhetorical function, shown in \Cref{tab:pivot}. These are used as features for correctness prediction (e.g., counting their presence in CoTs).

\paragraph{Vague words.}

\input{figures/appendix/vague_phrases}

Vague expressions are often used to hedge or express uncertainty, and may correlate with lower confidence or correctness in model reasoning. We group these into two categories, uncertainty and hedging—based on their rhetorical function. See \Cref{tab:vague}.

\paragraph{Feature-All.}

\input{figures/appendix/feature_all}

We also define a feature set that combines lexical and stylistic signals for each CoT response. Specifically, we consider four interpretable features: response length (token count), lexical diversity (unique token count), number of pivot words, and number of vague words. See \Cref{tab:featureall} for detailed definitions. For prediction, we compute the sum of these feature values for each example, encouraging longer, more expressive, and more structured responses, while penalizing vague expressions. The model response with the highest total score is selected as the final prediction.

\section{Supplementary Results}

\subsection{Voting Improvement vs. $\overline{\mathrm{NMI}}$ and Correlation}\label{app:results:mv_correlation}
\input{figures/improvement_nmi_all}

\input{figures/improvement_corr_all}

While the overall trends in \Cref{fig:improvement_nmi:all,fig:improvement_corr:all} are consistent with our theoretical expectations, MathVision stands out as an exception. Specifically, we observe weaker or even inverted correlation between prediction dependency and voting improvement on this dataset. A likely explanation is that MathVision poses significantly higher difficulty for current VLMs, its average accuracy across models is around 30\%, which suggests that models are often uncertain or guessing. In such low-performance regimes, prediction behaviors may become erratic or overly stochastic, reducing the reliability of entropy, correlation, and voting-based signals. As a result, the dependency measures may not reflect meaningful error structure, making voting behavior less predictable.

\subsection{Empirical Evidence to Support Assumption}\label{app:results:assumption}
\input{figures/assumption_all}
\Cref{fig:hvsacc:all} shows the relationship between normalized entropy $\widetilde{H}_u$ and accuracy across multiple models on six benchmarks. We observe a strong inverse correlation between entropy and accuracy, consistent with our Entropy-Accuracy Monotonicity assumption (\Cref{ass:entropy-accuracy}). Higher-performing models generally exhibit lower entropy, indicating more confident and reliable predictions.

\subsection{Ensemble Robustness Analysis}
\label{app:ensemble_robustness}
\input{figures/appendix/ensemble_all}

\paragraph{Setup.}
To investigate whether including weaker, smaller models in an ensemble degrades performance, we conducted a comprehensive ablation study using the Qwen-2.5-VL family (3B, 7B, 32B, 72B) on the MathVista benchmark. We evaluated all possible pairwise and triplet combinations to simulate diverse ensemble qualities (\cref{tab:ensemble_all}).

\paragraph{Findings.}
The results highlight a critical failure mode of majority voting in heterogeneous ensembles. For instance, when combining the weakest model (3B, $\sim$52\% accuracy) with the strongest (72B, $\sim$80\% accuracy), voting performance drops significantly to 73.15\%, effectively dragging the strong model down toward the average. This confirms that voting is vulnerable when the ensemble contains models with large capability gaps. 

In contrast, \textit{ETTC demonstrates remarkable robustness}. In the same 3B+72B setting, ETTC achieves 84.81\%, not only avoiding the degradation seen in voting but actually surpassing the standalone performance of the 72B model by over 4\%. This trend holds across triplet configurations as well; for example, in the \{3B, 7B, 72B\} ensemble, voting achieves 82.22\% while ETTC reaches 84.44\%.

\paragraph{Takeaway.}
These findings demonstrate that ETTC effectively utilizes predictive entropy to ``filter'' unreliable signals from weaker models while still leveraging their occasional correct, high-confidence predictions. Unlike voting, which requires careful curation of similarly-capable models to avoid dilution, ETTC allows for \textit{safe} ensembling: users can integrate smaller, cheaper models (like the 3B) to boost larger ones without the risk of degrading overall performance.

\subsection{Generalization to Thinking LLMs}
\label{app:thinkingllm}
\input{figures/appendix/results_thinking}
\paragraph{Setup.}
To verify that our findings are not an artifact of the visual modality or specific to Vision-Language Models (VLMs), we extended our evaluation to text-only reasoning tasks using Thinking LLMs. We employed the Qwen-3-Thinking family (4B, 30B, and 235B parameters) and evaluated them on two established reasoning benchmarks: ARC-Easy (common sense reasoning) and MMLU-Pro (mathematics subset). We tested various ensemble configurations, including combinations of models with vast size discrepancies (e.g., 4B + 235B).

\paragraph{Findings.}
As shown in \cref{tab:thinking_llm}, the benefits of ETTC generalize robustly to the text domain. Across all 8 ensemble configurations, ETTC consistently outperforms majority voting. Notably, on the MMLU-Pro dataset, aggregating the 4B and 30B models with voting yields 89.34\%, significantly underperforming the standalone 30B model (94.12\%) due to the noise introduced by the smaller model. In contrast, ETTC achieves 94.08\%, effectively recovering the performance of the strong model by filtering out the 4B model's low-confidence errors. Furthermore, in the most heterogeneous ensemble (4B+235B), ETTC improves upon voting by nearly 5 points on MMLU-Pro (94.67\% vs 89.79\%), demonstrating its ability to safely leverage small-model compute without diluting the quality of large-model outputs.

\paragraph{Takeaway.}
These results confirm that the correlation between predictive entropy and correctness is a fundamental property of reasoning models, whether multimodal or text-only. ETTC's success with ``Thinking'' models suggests it is a general-purpose, modality-agnostic strategy for enhancing test-time reliability in heterogeneous ensembles.

\subsection{Supervised ETTC}
\label{app:supervised_ettc}

We provide additional details on the supervised variant of ETTC, which learns from a small set of labeled question--model pairs when low entropy is a \emph{reliable} signal of correctness.

\paragraph{Problem setting.}
Given $Q$ questions and $M$ models, each model $u$ produces a predictive distribution $p_{qu}(\cdot)$ over $K$ options for question $q$, aggregated over $U{=}16$ stochastic decoding samples (see \cref{sec:when}). The goal is to learn a function that predicts whether a model's low-entropy output is likely to be correct.

\paragraph{Feature construction.}
For each $(q,u)$ pair, we compute two features:
\[
\widetilde{H}_{qu} := -\tfrac{1}{\log K} \sum_{k=1}^K p_{qu}(k)\log p_{qu}(k), 
\quad
\mathrm{RelEnt}_{qu} := \frac{\widetilde{H}_{qu}-\min_v \widetilde{H}_{qv}}{\max_v \widetilde{H}_{qv}-\min_v \widetilde{H}_{qv}}.
\]
Here $\widetilde{H}_{qu}$ is the normalized entropy of model $u$, while $\mathrm{RelEnt}_{qu}$ contextualizes this entropy relative to other models for the same question. The final feature vector is $(\widetilde{H}_{qu}, \mathrm{RelEnt}_{qu}) \in \mathbb{R}^2$.

\paragraph{Labels and classifier.}
The binary label is
\[
Z_{qu} := \mathbb{I}\{\hat y_{qu} = Y_q\},
\]
where $\hat y_{qu}$ is the top-1 prediction and $Y_q$ is the ground truth. We train a logistic regression classifier to predict $\Pr(Z_{qu}=1)$ from the entropy features.

\paragraph{Training protocol.}
To simulate low-resource conditions, we use two-fold cross-validation across questions: each dataset is split into halves, one for training and one for testing, with roles reversed in a second run. This prevents test leakage and mimics scenarios where only limited annotations are available.

\paragraph{Inference rule.}
At test time, for each $(q,u)$ we compute the adjusted score
\[
\mathrm{Score}_{qu} := \widetilde{H}_{qu} \cdot (1 - \hat{p}_{qu}),
\]
where $\hat{p}_{qu}$ is the predicted correctness probability from the classifier. We then select the model with the lowest score:
\[
u_q^\star := \arg\min_u \mathrm{Score}_{qu},
\quad
\widehat{Y}_q := \hat y_{q u_q^\star}.
\]
This rule penalizes overconfident but unreliable predictions while rewarding trustworthy ones.

\input{figures/results_supervised.tex}

\paragraph{Results.}
As in \cref{tab:entropyvsmv:merged}, supervised ETTC outperforms both voting and unsupervised ETTC across datasets and ensemble settings. Gains are largest on ambiguous tasks (e.g., MathVision, MMStar, MMMU), where entropy alone is less reliable. Even with only two-fold cross-fitting and no extra supervision, the classifier learns to identify failure modes of entropy selection, making more robust choices and underlining the value of combining entropy with supervised error modeling.

\section*{Limitations}
Our study focuses on multiple-choice visual reasoning tasks and assumes access to model confidence scores via output distributions. The proposed methods, especially entropy-based selection, may not directly generalize to open-ended tasks or models lacking probabilistic outputs. 
Additionally, while our evaluation covers diverse datasets and model ensembles, the gains of supervised entropy-based TTC depend on the quality and availability of annotated examples, which may be costly to obtain in some domains. 
Lastly, our analysis assumes that entropy correlates with accuracy, which may not hold for all models or tasks.

\section*{LLM Usage}
We used ChatGPT as general-purpose assistive tools during the preparation of this paper. 
Specifically, LLMs were employed for polishing grammar, improving clarity, formatting LaTeX, generating illustrative figures, and debugging minor code snippets. 
LLMs were not involved in research ideation, experimental design, or the development of theoretical results.

\end{document}

%% file: figures/results_ttc.tex
\begin{figure*}[!t]
\vspace{.2cm}
    \centering
    \begin{subfigure}{.37\textwidth}
    \centering
    \scalefont{0.8}
    \begin{tikzpicture}
    \begin{axis}[
        ybar,
        bar width=0.1cm,
        width=6cm, height=4.cm,
        enlarge x limits=0.1,
        ylabel={Accuracy (\%)},
        xlabel near ticks,
        ylabel near ticks,
        x tick label style={rotate=30, anchor=east},
        ymin=20, ymax=85,
        xtick=data,
        xticklabel style={font=\scriptsize},
        symbolic x coords={\textbf{Overall}, MathVista, MathVision, TQA, ScienceQA, MMStar, MMMU},
        legend style={at={(0.5,1.1)}, anchor=south, legend columns=5, nodes={scale=0.8, transform shape}},
        ymajorgrids=true,
        grid style=dashed,
    ]

    \addplot+[line width=0.3mm, color=ETHRed, postaction={
        pattern=north east lines
    }] coordinates { 
    (\textbf{Overall}, 57.191877551758616)
    (MathVista, 63.18783068783068)
    (MathVision, 28.707222118612457)
    (TQA, 75.20493585562079)
    (ScienceQA, 79.41913379134499)
    (MMStar, 48.23928571428572)
    (MMMU, 48.39285714285714)
    };
    \addlegendentry{Vanilla}
    \addplot+[line width=0.3mm, color=ETHPurple] coordinates { 
    (\textbf{Overall}, 58.14052218529041)
    (MathVista, 64.32870370370371)
    (MathVision, 29.332571801566576)
    (TQA, 76.44379212872364)
    (ScienceQA, 80.48949996458671)
    (MMStar, 48.95952380952381)
    (MMMU, 49.28904170363797)
    };
    \addlegendentry{Majority Voting}

    \end{axis}
    \end{tikzpicture}
    \caption{\textbf{Direct Answer Prompt.}}
    \label{fig:results_ncot}
    \end{subfigure}
    \begin{subfigure}{.62\textwidth}
    \centering
    \scalefont{0.8}
    \begin{tikzpicture}
    \begin{axis}[
        ybar,
        bar width=0.1cm,
        width=11cm, height=4.cm,
        enlarge x limits=0.1,
        xlabel near ticks,
        ylabel near ticks,
        x tick label style={rotate=30, anchor=east},
        ymin=20, ymax=85,
        xtick=data,
        xticklabel style={font=\scriptsize},
        symbolic x coords={\textbf{Overall}, MathVista, MathVision, TQA, ScienceQA, MMStar, MMMU},
        legend style={at={(0.5,1.1)}, anchor=south, legend columns=5, nodes={scale=0.8, transform shape}},
        ymajorgrids=true,
        grid style=dashed,
    ]

    \addplot+[line width=0.3mm, color=ETHRed, postaction={
        pattern=north east lines
    }] coordinates { 
    (\textbf{Overall}, 58.918526253062495)
    (MathVista, 65.18353174603175)
    (MathVision, 30.65495617306975)
    (TQA, 76.2203739943466)
    (ScienceQA, 78.46740208230044)
    (MMStar, 52.389285714285705)
    (MMMU, 50.59560780834074)
    };
    \addlegendentry{Vanilla}
    \addplot+[line width=0.3mm, color=ETHBronze] coordinates { 
    (\textbf{Overall}, 58.52597677289973)
    (MathVista, 64.8941798941799)
    (MathVision, 29.839612085042894)
    (TQA, 76.03392041748207)
    (ScienceQA, 78.26333309724485)
    (MMStar, 51.61904761904762)
    (MMMU, 50.50576752440106)
    };
    \addlegendentry{Pivot Word}
    \addplot+[line width=0.3mm, color=ETHGreen] coordinates { 
    (\textbf{Overall}, 56.622798379203196)
    (MathVista, 61.79894179894181)
    (MathVision, 27.72286460276016)
    (TQA, 73.58121330724069)
    (ScienceQA, 77.06636447340463)
    (MMStar, 50.16190476190475)
    (MMMU, 49.40550133096717)
    };
    \addlegendentry{CoT Length}
    \addplot+[line width=0.3mm, color=ETHBlue] coordinates { 
    (\textbf{Overall}, 59.11383228727514)
(MathVista, 65.34391534391534)
(MathVision, 30.790749720253636)
(TQA, 76.33398564905413)
(ScienceQA, 78.18542389687654)
(MMStar, 52.142857142857146)
(MMMU, 50.29281277728482)
    };
    \addlegendentry{Feature-All}
    \addplot+[line width=0.3mm, color=ETHPurple] coordinates { 
    (\textbf{Overall}, 62.007564105404846)
    (MathVista, 69.46263227513228)
    (MathVision, 32.95295598657218)
    (TQA, 79.68063709502066)
    (ScienceQA, 81.110117572066)
    (MMStar, 55.26428571428571)
    (MMMU, 53.57475598935226)
    };
    \addlegendentry{Majority Voting}
    \end{axis}
    \end{tikzpicture}
    \caption{\textbf{CoT Prompt.}}
    \label{fig:results_cot}
    \end{subfigure}
    \caption{Comparison of test-time compute (TTC) strategies under two prompting styles. 
    In \textbf{Direct Answer} (left), models are instructed to output only the final answer without reasoning; feature-based methods are inapplicable, and majority voting shows no improvement. 
    In \textbf{CoT} (right), models are prompted to reason step by step. While feature-based methods yield no gains, voting offers modest but consistent improvement across datasets.}
    \label{fig:results_ttc}
\end{figure*}

%% file: figures/convergence_nimcorr.tex
\begin{figure}[!t]
\vspace{1.1cm}
    \centering
    \begin{subfigure}[t]{.49\linewidth}
        \centering
        \scalefont{0.8}
        \begin{tikzpicture}
        \begin{axis}[
            width=4.5cm, height=3.5cm,
            ylabel={$\overline{\mathrm{NMI}}$},
            xlabel={$U$},
            xlabel near ticks,
            ylabel near ticks,
            xtick={2,4,6,8,10,12,14,16},
            xticklabel style={font=\scriptsize},
            symbolic x coords={2,3,4,5,6,7,8,9,10,11,12,13,14,15,16},
            legend style={at={(2.,1.5)}, legend columns=3, nodes={scale=0.8, transform shape},/tikz/overlay,},
            ymajorgrids=true,
            grid style=dashed,
            scaled y ticks=false,
            yticklabel style={/pgf/number format/fixed, /pgf/number format/precision=2},
            ytick={0, 0.1, 0.2, 0.3},
            ymin=0, ymax=0.22 %
            ]
        \addplot+[line width=0.3mm, mark size=0pt, color=ETHBlue] coordinates {
            (2, 0.09493800391344606)
            (3, 0.10087608086261894)
            (4, 0.11530239464656945)
            (5, 0.12641969130949288)
            (6, 0.12710150648682958)
            (7, 0.1200567590176953)
            (8, 0.12225441107304147)
            (9, 0.12213936578729277)
            (10, 0.11955810615889854)
            (11, 0.12176625983051008)
            (12, 0.12472948060290967)
            (13, 0.12282239529794765)
            (14, 0.12455954233191063)
            (15, 0.1219981838659735)
            (16, 0.12506253773210915)

        };
        \addlegendentry{MathVista}
        \addplot+[line width=0.3mm, mark size=0pt, color=ETHPetrol] coordinates {
            (2, 0.030986616562334254)
            (3, 0.023077307659705778)
            (4, 0.028386502907469857)
            (5, 0.028944177164472052)
            (6, 0.028626243604297146)
            (7, 0.026493066221874084)
            (8, 0.024742940421719956)
            (9, 0.023974717253262282)
            (10, 0.023671200642108647)
            (11, 0.023794556736672346)
            (12, 0.023599302127468077)
            (13, 0.02326781029194378)
            (14, 0.02340685734226509)
            (15, 0.022939804015579164)
            (16, 0.02259082046024646)

        };
        \addlegendentry{MathVision}
        \addplot+[line width=0.3mm, mark size=0pt, color=ETHGreen] coordinates {
            (2, 0.1575283020033133)
            (3, 0.15919870650932363)
            (4, 0.15528762617665495)
            (5, 0.1553725703267822)
            (6, 0.1580895796543632)
            (7, 0.15571114441588127)
            (8, 0.1575095116818985)
            (9, 0.15528167076270014)
            (10, 0.1547330718645183)
            (11, 0.15569623943579627)
            (12, 0.1555654527630176)
            (13, 0.15641719396269782)
            (14, 0.1563811578761546)
            (15, 0.15737540086887442)
            (16, 0.15879199656075815)

        };
        \addlegendentry{TQA}
        \addplot+[line width=0.3mm, mark size=0pt, color=ETHBronze] coordinates {
            (2, 0.15289778693947867)
            (3, 0.15179961464917915)
            (4, 0.15036177231730175)
            (5, 0.15111682067581214)
            (6, 0.15299103483413967)
            (7, 0.1544822266233223)
            (8, 0.15618955504407644)
            (9, 0.1564803184025755)
            (10, 0.15308931894698394)
            (11, 0.1550102020941444)
            (12, 0.15317899556488676)
            (13, 0.1542530147675887)
            (14, 0.15490697493302985)
            (15, 0.15549445821199315)
            (16, 0.15466061033373482)

        };
        \addlegendentry{ScienceQA}
        \addplot+[line width=0.3mm, mark size=0pt, color=ETHRed] coordinates {
            (2, 0.10548110807406717)
            (3, 0.10731528417808663)
            (4, 0.10219089698934224)
            (5, 0.1005199312064893)
            (6, 0.10224603873603003)
            (7, 0.10293308335154061)
            (8, 0.10349213229620734)
            (9, 0.10287319025855617)
            (10, 0.10230871051189525)
            (11, 0.10112500046317525)
            (12, 0.10232824229359573)
            (13, 0.1044661718006984)
            (14, 0.10423695004325846)
            (15, 0.1040816383728089)
            (16, 0.10461348726527127)

        };
        \addlegendentry{MMStar}
        \addplot+[line width=0.3mm, mark size=0pt, color=ETHPurple] coordinates {
            (2, 0.08451058037539709)
            (3, 0.09309013942974187)
            (4, 0.07658432865993088)
            (5, 0.08767967183950906)
            (6, 0.092431638087479)
            (7, 0.08697682164250706)
            (8, 0.0841765605668457)
            (9, 0.08501299981719702)
            (10, 0.08490624723916079)
            (11, 0.0857473892520195)
            (12, 0.08501500248052185)
            (13, 0.08578243699193165)
            (14, 0.08633878681419789)
            (15, 0.08690589444256495)
            (16, 0.0861867362694768)

        };
        \addlegendentry{MMMU}
        \draw [dashed] (8,-1) -- (8,1);
        \end{axis}
        \end{tikzpicture}
        \label{fig:convergence_nmi:qwen7b}
    \end{subfigure}
    \begin{subfigure}[t]{.49\linewidth}
        \centering
        \scalefont{0.8}
        \begin{tikzpicture}
        \begin{axis}[
            width=4.5cm, height=3.5cm,
            ylabel={$\overline{\rho}$},
            xlabel={$U$},
            xlabel near ticks,
            ylabel near ticks,
            xtick={2,4,6,8,10,12,14,16},
            xticklabel style={font=\scriptsize},
            symbolic x coords={2,3,4,5,6,7,8,9,10,11,12,13,14,15,16},
            legend style={at={(0.5,1.15)}, anchor=south, legend columns=7, nodes={scale=0.8, transform shape}},
            ymajorgrids=true,
            grid style=dashed,
        ]
        \addplot+[line width=0.3mm, mark size=0pt, color=ETHBlue] coordinates {
            (2, 0.5440036479708164)
            (3, 0.5650933410714533)
            (4, 0.5696261171544478)
            (5, 0.5593865031679065)
            (6, 0.5378039327677089)
            (7, 0.5203519075408979)
            (8, 0.514172364294791)
            (9, 0.5156380651847571)
            (10, 0.5166457954473842)
            (11, 0.5280918919764648)
            (12, 0.5355104354599595)
            (13, 0.5321047322045217)
            (14, 0.5318102491184277)
            (15, 0.5369050489096862)
            (16, 0.5449761036644336)

        };
        \addplot+[line width=0.3mm, mark size=0pt, color=ETHPetrol] coordinates {
            (2, 0.5965565097397895)
            (3, 0.6030195285945913)
            (4, 0.634602726038166)
            (5, 0.6209929880901437)
            (6, 0.612863124347124)
            (7, 0.6018821134070151)
            (8, 0.611376797716822)
            (9, 0.6095603711943225)
            (10, 0.6129736860507379)
            (11, 0.6059499630776096)
            (12, 0.5991177251899719)
            (13, 0.6004296036292074)
            (14, 0.6039613571309618)
            (15, 0.6041794152952277)
            (16, 0.6069284023788312)

        };
        \addplot+[line width=0.3mm, mark size=0pt, color=ETHGreen] coordinates {
            (2, 0.6932983013564707)
            (3, 0.6898824964573743)
            (4, 0.6989832383738386)
            (5, 0.6984600399546534)
            (6, 0.6965670821255953)
            (7, 0.6928511791659514)
            (8, 0.7007112137017196)
            (9, 0.6980635425599911)
            (10, 0.6923952313016724)
            (11, 0.6932688496284741)
            (12, 0.6968550679514729)
            (13, 0.6911117800161924)
            (14, 0.6899668032379777)
            (15, 0.6900582367185785)
            (16, 0.6949457797515505)

        };
        \addplot+[line width=0.3mm, mark size=0pt, color=ETHBronze] coordinates {
            (2, 0.6761803348514138)
            (3, 0.702160461323051)
            (4, 0.7041821829771363)
            (5, 0.6999198093463934)
            (6, 0.7003665228685988)
            (7, 0.6993886979705815)
            (8, 0.7062794009300036)
            (9, 0.6996196797219006)
            (10, 0.6949260870528252)
            (11, 0.6926830279020572)
            (12, 0.6931796528658999)
            (13, 0.6925330344925046)
            (14, 0.6910334792768094)
            (15, 0.6907896301914794)
            (16, 0.6896679246607068)

        };
        \addplot+[line width=0.3mm, mark size=0pt, color=ETHRed] coordinates {
            (2, 0.5779808959156785)
            (3, 0.5337993859413297)
            (4, 0.5420240911062996)
            (5, 0.5325435899251258)
            (6, 0.5333025492601113)
            (7, 0.5361843824666054)
            (8, 0.5332775062278469)
            (9, 0.5338017155306315)
            (10, 0.5243220330720987)
            (11, 0.5305536017291262)
            (12, 0.529102746261505)
            (13, 0.5357470730094044)
            (14, 0.5385715845555661)
            (15, 0.541592808792794)
            (16, 0.5395095389761869)

        };
        \addplot+[line width=0.3mm, mark size=0pt, color=ETHPurple] coordinates {
            (2, 0.45897971124499903)
            (3, 0.5221119582330601)
            (4, 0.5028426729485123)
            (5, 0.5229385667912453)
            (6, 0.5243309594321055)
            (7, 0.5361485882361537)
            (8, 0.5178759501839375)
            (9, 0.5175947158417443)
            (10, 0.5186469793354184)
            (11, 0.519436637316106)
            (12, 0.5223168442251747)
            (13, 0.5279432278940642)
            (14, 0.5278480466209741)
            (15, 0.5224872986025966)
            (16, 0.5277817141225475)

        };
        \draw [dashed] (12,-1) -- (12,1);
        \end{axis}
        \end{tikzpicture}
        \label{fig:convergence_corr:qwen7b}
    \end{subfigure}
    \caption{Convergence of dependency with decoding sample size $U$ on Qwen-7B. Both $\overline{\mathrm{NMI}}$ and $\overline{\rho}$ stabilize when $U{=}12$, suggesting that a moderate number of samples is sufficient to estimate dependency reliably.}
    \label{fig:convergence_nimcorr:qwen7b}
\end{figure}

%% file: figures/improvement_nmicorr.tex
\begin{figure}[!t]
    \centering
    \vspace{.7cm}
    \begin{subfigure}[t]{.48\linewidth}
        \centering
        \scalefont{0.8}
        \begin{tikzpicture}
        \begin{axis}[
          width=\linewidth,
          width=4.5cm, height=3.5cm,
          xmin=0, xmax=0.4,
          ymin=0.01, ymax=0.06,
          xlabel={$\overline{\mathrm{NMI}}$},
          ylabel={$\Delta A_{\mathrm{MV}}(16)$},
          xticklabel style={font=\scriptsize},
          legend style={at={(2.522,1.45)}, legend columns=7, nodes={scale=0.8, transform shape},/tikz/overlay,},
          ymajorgrids=true, 
          grid style=dashed,
        ]
        
\addplot[
          only marks, mark=o, mark size=3.2pt,
          draw=black, fill=ETHGray, line width=0.6pt
        ] coordinates {(0.10865103143693278, 0.04052922210013088)};
        \addlegendentry{LLaMA}
        
        \addplot[
          only marks, mark=square*, mark size=3.2pt,
          draw=black, fill=ETHPurple, line width=0.6pt
        ] coordinates {(0.16694106474118295, 0.03134322928026256)};
        \addlegendentry{Gemma}
        
        \addplot[
          only marks, mark=triangle*, mark size=3.6pt,
          draw=black, fill=ETHRed, line width=0.6pt
        ] coordinates {(0.2960765573340331, 0.018896999552622917)};
        \addlegendentry{Pixtral}
        
        \addplot[
          only marks, mark=diamond*, mark size=3.6pt,
          draw=black, fill=ETHBronze, line width=0.6pt
        ] coordinates {(0.058333002760592056, 0.05535514537067254)};
        \addlegendentry{Qwen-3B}
        
        \addplot[
          only marks, mark=otimes*, mark size=3.6pt,
          draw=black, fill=ETHGreen, line width=0.6pt
        ] coordinates {(0.20269462097893953, 0.02934642553082467)};
        \addlegendentry{Qwen-7B}
        
        \addplot[
          only marks, mark=star, mark size=2.6pt,
          draw=black, fill=ETHPetrol, line width=0.6pt
        ] coordinates {(0.314257221899171, 0.021071888203437306)};
        \addlegendentry{Qwen-32B}
        
        \addplot[
          only marks, mark=pentagon*, mark size=3.6pt,
          draw=black, fill=ETHBlue, line width=0.6pt
        ] coordinates {(0.33757829526989575, 0.020040048892834172)};
        \addlegendentry{Qwen-72B}
        
        \pgfplotstableread{
        x   y
0.10865103143693278 0.04052922210013088
0.16694106474118298 0.03134322928026256
0.29607655733403315 0.018896999552622917
0.058333002760592056 0.05535514537067254
0.20269462097893953 0.02934642553082467
0.314257221899171 0.021071888203437306
0.33757829526989575 0.020040048892834172
        }\allpoints
        
        \pgfplotstablecreatecol[linear regression={y=y,x=x}]{regression}{\allpoints}
        \pgfmathsetmacro{\m}{\pgfplotstableregressiona}
        \pgfmathsetmacro{\b}{\pgfplotstableregressionb}
        
        \addplot[very thick, black, domain=0:1] {\m*x + \b};
        
        \end{axis}
        \end{tikzpicture}
        \label{fig:improvement_nmi:overview}
    \end{subfigure}
    \begin{subfigure}[t]{.48\linewidth}
        \centering
        \scalefont{0.8}
        \begin{tikzpicture}
        \begin{axis}[
          width=\linewidth,
          width=4.5cm, height=3.5cm,
          xmin=0.45, xmax=0.9,
          ymin=0.01, ymax=0.06,
          xlabel={$\overline{\rho}$},
          xticklabel style={font=\scriptsize},
          legend style={at={(0.5,1.15)}, anchor=south, legend columns=2, nodes={scale=0.8, transform shape},legend cell align={left}},
          ymajorgrids=true,
          grid style=dashed,
        ]

\addplot[
          only marks, mark=o, mark size=3.2pt,
          draw=black, fill=ETHGray, line width=0.6pt
        ] coordinates {(0.6006349105923761, 0.04052922210013088)};
        
        \addplot[
          only marks, mark=square*, mark size=3.2pt,
          draw=black, fill=ETHPurple, line width=0.6pt
        ] coordinates {(0.6593337999567225, 0.03134322928026256)};
        
        \addplot[
          only marks, mark=triangle*, mark size=3.6pt,
          draw=black, fill=ETHRed, line width=0.6pt
        ] coordinates {(0.7485089153551084, 0.018896999552622917)};
        
        \addplot[
          only marks, mark=diamond*, mark size=3.6pt,
          draw=black, fill=ETHBronze, line width=0.6pt
        ] coordinates {(0.5113431052530255, 0.05535514537067254)};
        
        \addplot[
          only marks, mark=otimes*, mark size=3.6pt,
          draw=black, fill=ETHGreen, line width=0.6pt
        ] coordinates {(0.7145978117844887, 0.02934642553082467)};
        
        \addplot[
          only marks, mark=star, mark size=2.6pt,
          draw=black, fill=ETHPetrol, line width=0.6pt
        ] coordinates {(0.7827027293666758, 0.021071888203437306)};
        
        \addplot[
          only marks, mark=pentagon*, mark size=3.6pt,
          draw=black, fill=ETHBlue, line width=0.6pt
        ] coordinates {(0.808790397980558, 0.020040048892834172)};
        
        \pgfplotstableread{
        x   y
0.6006349105923761 0.04052922210013088
0.6593337999567225 0.03134322928026256
0.7485089153551084 0.018896999552622917
0.5113431052530255 0.05535514537067254
0.7145978117844887 0.02934642553082467
0.7827027293666758 0.021071888203437306
0.808790397980558 0.020040048892834172
        }\allpoints
        
        \pgfplotstablecreatecol[linear regression={y=y,x=x}]{regression}{\allpoints}
        \pgfmathsetmacro{\m}{\pgfplotstableregressiona}
        \pgfmathsetmacro{\b}{\pgfplotstableregressionb}
        
        \addplot[very thick, black, domain=0:1] {\m*x + \b};
        
        \end{axis}
        \end{tikzpicture}
        \label{fig:improvement_corr:overview}
    \end{subfigure}
    \vspace{-.27cm}
    \caption{Majority voting improvement decreases with higher prediction dependency. Across models, we can find that voting improvement $\Delta A_{\mathrm{MV}}(16)$ is negatively correlated with both $\overline{\mathrm{NMI}}$ and $\overline{\rho}$, confirming theoretical predictions.}
    \label{fig:improvement_both:overview}
\end{figure}

%% file: figures/table_entropyvsmv_samesize.tex
\begin{table*}[!ht]
\centering
\caption{Comparison of ETTC and Voting in the multi-model ensemble setting with \emph{similar-sized models from different families}. ETTC consistently outperforms majority voting across all six datasets, with particularly large gains on benchmarks where model accuracies vary widely (e.g., MathVista, MathVision). This highlights ETTC's ability to prioritize stronger models when aggregating predictions.}
\begin{tabular}{lccccccc}
\toprule
\midrule
\multirow{2}{*}{\textbf{Accuracy (\%)}} & \multicolumn{4}{c}{\textbf{Models}} & \multirow{2}{*}{\textbf{Average}} & \multirow{2}{*}{\textbf{Voting}} & \multirow{2}{*}{\textbf{ETTC}} \\
\cmidrule(lr){2-5}
 & \textbf{LLaMA} & \textbf{Pixtral} & \textbf{Gemma} & \textbf{Qwen-7B} &  &  &  \\
\midrule
MathVista  & 52.04& 56.03& 65.03& \underline{72.08}& 61.30& 68.33& \textbf{75.93}\\
MathVision & 23.41& 25.20& 31.84& 30.18& 27.66& \underline{32.05} & \textbf{35.57}\\
TQA        & 70.41& 77.34& 78.86& 78.50& 76.28& \underline{83.65}& \textbf{83.90}\\
ScienceQA  & 77.84& 78.32& 77.83& 79.76& 78.44& \textbf{85.52}& \underline{85.28}\\
MMStar     & 46.09& 50.35& 53.40& 56.77& 51.65& \underline{59.27}& \textbf{60.07}\\
MMMU       & 42.87& 47.65& 52.49& 50.53& 48.39& \underline{53.66}& \textbf{58.63}\\
\midrule
\textbf{Average} & 52.11& 55.82& 59.91& 61.30& 57.29& \underline{63.75} & \textbf{66.56}\\
\midrule
\bottomrule
\end{tabular}
\label{tab:entropyvsmv:samesize}
\end{table*}

%% file: figures/table_entropyvsmv_samefamily.tex
\begin{table*}[!ht]
\centering
\caption{Comparison of ETTC and Voting in the multi-model ensemble setting using \emph{same-family models} (Qwen) of increasing scale. ETTC consistently outperforms voting across all datasets, even under highly correlated predictions. Gains are especially pronounced when model accuracies increase with scale, demonstrating ETTC's advantage in prioritizing stronger models within homogeneous ensembles.}
\begin{tabular}{lccccccc}
\toprule
\midrule
\multirow{2}{*}{\textbf{Accuracy (\%)}} & \multicolumn{4}{c}{\textbf{Models}} & \multirow{2}{*}{\textbf{Average}} & \multirow{2}{*}{\textbf{Voting}} & \multirow{2}{*}{\textbf{ETTC}} \\
\cmidrule(lr){2-5}
~ & \textbf{Qwen-3B} & \textbf{Qwen-7B} & \textbf{Qwen-32B} & \textbf{Qwen-72B} &  &  &  \\
\midrule
MathVista  & 51.94& 72.08& 78.58& 80.58& 70.80& \underline{83.15} & \textbf{84.44}\\
MathVision & 22.27& 30.18& 38.80& \underline{42.89} & 33.53& 41.32& \textbf{44.84}\\
TQA        & 60.85& 78.50& 83.06& 84.52& 76.73& \underline{84.90} & \textbf{86.70}\\
ScienceQA  & 66.67& 79.76& 84.21& \underline{84.64} & 78.82& 84.04& \textbf{85.03}\\
MMStar     & 41.22& 56.77& 56.34& \underline{62.56} & 54.22& 61.00& \textbf{63.73}\\
MMMU       & 37.41& 50.53& 59.04& \underline{64.18} & 52.79& 58.63& \textbf{65.34}\\
\midrule
\textbf{Average} & 46.73& 61.30& 66.67& \underline{69.90} & 61.15& 68.84& \textbf{71.68}\\
\midrule
\bottomrule
\end{tabular}
\label{tab:entropyvsmv:samefamily}
\end{table*}

%% file: figures/table_dataset_details.tex
\begin{table}[ht]
\small
\centering
\caption{Dataset statistics and characteristics used in our evaluation. Each dataset is categorized by its domain (Math, Diagram, or General), the evaluation split used (e.g., test or validation), the number of multiple-choice questions (\textbf{Size}), and the number of answer options per question (\textbf{Option Num.}).}
\begin{tabular}{lcccc}
\toprule
\midrule
\textbf{Dataset} & \textbf{Domain} & \textbf{Type} & \textbf{Size} & \textbf{Option Num.}\\
\midrule
MathVista & Math & testmini & 540 & 2--8 \\
MathVision & Math & test & 1,532 & 5 \\
\midrule
TQA & Diagram & test & 3,285 & 4 \\
ScienceQA & Diagram & test & 2,017 & 2--5 \\
\midrule
MMStar & General & val & 1,500 & 4 \\
MMMU & General & val & 805 & 2--9 \\
\midrule
\bottomrule
\end{tabular}
\label{tab:dataset_stats}
\end{table}

%% file: figures/appendix/prompt_ncot.tex
\begin{figure}[ht]
\begin{tcolorbox}[colback=violet!10, colframe=violet!70!black, title={\large Pipeline for Benchmark Evaluation}]

\begin{center}
    \includegraphics[width=0.98\textwidth]{}
\end{center}

\begin{tcolorbox}[colback=gray!10, colframe=gray!80]
\setstretch{1.1}

\textbf{Prompt: }

Question: Is Periwinkle the maximum? Options: 'yes', 'no'. ONLY SIMPLY choose the correct option in a single sentence or a single word. No preamble, no explanation.
\end{tcolorbox}

\begin{tcolorbox}[colback=gray!10, colframe=gray!80]
\setstretch{1.1}
\textbf{Output: }

No.
\end{tcolorbox}

\end{tcolorbox}
\caption{Example of a direct QA prompt used for evaluating model predictions without reasoning.}
\label{fig:prompt_noct}
\end{figure}

%% file: figures/appendix/prompt_cot.tex
\begin{figure}[!t]
\centering
\begin{tcolorbox}[colback=violet!10, colframe=violet!70!black, title={\large Pipeline for Benchmark Evaluation}]

\begin{center}
    \includegraphics[width=0.98\textwidth]{images/appendix/case_analysis_math_14.png}
\end{center}

\begin{tcolorbox}[colback=gray!10, colframe=gray!80]
\setstretch{1.1}

\textbf{Prompt: }

Question: Is Periwinkle the maximum? Options: 'yes', 'no'. Among the given options, the answer is: (Let's think step by step, and give the answer at the end of your thought with **Answer:**).
\end{tcolorbox}

\begin{tcolorbox}[colback=gray!10, colframe=gray!80]
\setstretch{1.1}
\textbf{Output: }

Looking at the bar plot, I can see that the y-values for the colors are: Dark Salmon (48), Periwinkle (60), Gray (92), and Tomato (96). \textbackslash{}n\textbackslash{}nComparing these values, I can see that the highest value is 96, which corresponds to the color Tomato. \textbackslash{}n\textbackslash{}nTherefore, the maximum is Tomato.\textbackslash{}n\textbackslash{}n**Answer:** no
\end{tcolorbox}

\end{tcolorbox}
\caption{Example of a chain-of-thought (CoT) prompt used to elicit intermediate reasoning steps. This format is used when analyzing consistency or measuring correctness under step-by-step reasoning.}
\label{fig:prompt_cot}
\end{figure}

%% file: figures/appendix/pivot_phrases.tex
\begin{table}[ht]
\centering
\small
\caption{Pivot phrases categorized by reasoning function.}
\begin{tabular}{p{3.2cm} p{9.8cm}}
\toprule
\midrule
\textbf{Reasoning Type} & \textbf{Example Phrases} \\
\midrule
\textbf{Realization} 
& ``wait'', ``oh'', ``actually'', ``I missed something''\\
\addlinespace

\textbf{Verification} 
& ``let me doublecheck'',
``to verify'',
``checking again''\\
\addlinespace

\textbf{Exploration} 
& ``what if'',
``another way to look at this'',
``alternatively''\\
\addlinespace

\textbf{Integration} 
& ``now I see how'',
``this connects back to'',
``putting this together''\\

\midrule
\bottomrule

\end{tabular}
\label{tab:pivot}
\end{table}

%% file: figures/appendix/vague_phrases.tex
\begin{table}[!ht]
\small
\centering
\caption{Vague expressions used in model reasoning, grouped by rhetorical effect.}
\begin{tabular}{p{3.2cm} p{9.8cm}}
\toprule
\midrule
\textbf{Reasoning Type} & \textbf{Example Phrases} \\
\midrule
\textbf{Uncertainty}& ``maybe"'', ``possibly'', ``perhaps'', ``probably'', ``might be'', ``could be'', 
            ``it seems''\\
\addlinespace

\textbf{Hedging}& ``somewhat'', ``rather'', ``kind of'', ``sort of'', 
            ``generally'', ``typically''\\
\midrule
\bottomrule
\end{tabular}
\label{tab:vague}
\end{table}

%% file: figures/appendix/feature_all.tex
\begin{table}[!ht]
\small
\centering
\caption{Overview of lexical and stylistic features used for CoT-based prediction.}
\begin{tabular}{p{3.2cm} p{9.8cm}}
\toprule
\midrule
\textbf{Feature} & \textbf{Modeling Method} \\
\midrule
\textbf{Token Number} & Measures the number of tokens in the CoT response. Longer responses may indicate more reasoning steps, though excessive length may signal loops or noise. We vectorize it as ${1}/\text{Token Number}$. \\
\addlinespace
\textbf{Lexical Diversity} & Captures vocabulary richness by counting the number of unique tokens. Low diversity often suggests repetition. We vectorize it as ${1}/\text{Vocabulary Size}$. \\
\addlinespace
\textbf{Pivot Word Number} & Counts the number of pivot expressions from \Cref{tab:pivot}, indicating structured reasoning or correction. We vectorize it as ${1}/\text{Pivot Word Number}$. \\
\addlinespace
\textbf{Vague Word Number} & Counts the number of vague phrases from \Cref{tab:vague}, which may reflect uncertainty or low confidence. We vectorize it as $1 - {1}/\text{Vague Word Number}$.  \\
\midrule
\bottomrule
\end{tabular}
\label{tab:featureall}
\end{table}

%% file: figures/improvement_nmi_all.tex
\begin{figure}[!ht]
\vspace{.8cm}
    \centering
    \begin{subfigure}[t]{.495\textwidth}
        \centering
        \scalefont{0.8}
        \begin{tikzpicture}
        \begin{axis}[
          width=\linewidth,
          width=7cm, height=4.5cm,
          xmin=0, xmax=0.5,
          ymin=0, ymax=0.1,
          xlabel={$\overline{\mathrm{NMI}}$},
          ylabel={$\Delta A_{\mathrm{MV}}(16)$},
          xticklabel style={font=\scriptsize},
          legend style={at={(2.,1.3)}, legend columns=8, nodes={scale=0.8, transform shape},/tikz/overlay,},
          ymajorgrids=true,
          grid style=dashed,
        ]
        
        \addplot[
          only marks, mark=o, mark size=3.2pt,
          draw=black, fill=ETHGray, line width=0.6pt
        ] coordinates {(0.12506253773210915, 0.028472222222222232)};
        \addlegendentry{LLaMA}
        
        \addplot[
          only marks, mark=square*, mark size=3.2pt,
          draw=black, fill=ETHPurple, line width=0.6pt
        ] coordinates {(0.1407469971730799, 0.04166666666666652)};
        \addlegendentry{Gemma}
        
        \addplot[
          only marks, mark=triangle*, mark size=3.6pt,
          draw=black, fill=ETHRed, line width=0.6pt
        ] coordinates {(0.22121046296895408, 0.03333333333333344)};
        \addlegendentry{Pixtral}
        
        \addplot[
          only marks, mark=diamond*, mark size=3.6pt,
          draw=black, fill=ETHBronze, line width=0.6pt
        ] coordinates {(0.06600692889671005, 0.08032407407407405)};
        \addlegendentry{Qwen-3B}
        
        \addplot[
          only marks, mark=otimes*, mark size=3.6pt,
          draw=black, fill=ETHGreen, line width=0.6pt
        ] coordinates {(0.17772193818350726, 0.05196759259259254)};
        \addlegendentry{Qwen-7B}
        
        \addplot[
          only marks, mark=star, mark size=2.6pt,
          draw=black, fill=ETHPetrol, line width=0.6pt
        ] coordinates {(0.3109096605248164, 0.03715277777777781)};
        \addlegendentry{Qwen-32B}
        
        \addplot[
          only marks, mark=pentagon*, mark size=3.6pt,
          draw=black, fill=ETHBlue, line width=0.6pt
        ] coordinates {(0.29960633223076366, 0.027430555555555514)};
        \addlegendentry{Qwen-72B}
        
        \pgfplotstableread{
        x   y
0.12506253773210915 0.028472222222222232
0.1407469971730799 0.04166666666666652
0.22121046296895408 0.03333333333333344
0.06600692889671005 0.08032407407407405
0.17772193818350726 0.05196759259259254
0.3109096605248164 0.03715277777777781
0.29960633223076366 0.027430555555555514
        }\allpoints
        
        \pgfplotstablecreatecol[linear regression={y=y,x=x}]{regression}{\allpoints}
        \pgfmathsetmacro{\m}{\pgfplotstableregressiona}
        \pgfmathsetmacro{\b}{\pgfplotstableregressionb}
        
        \addplot[very thick, black, domain=0:1] {\m*x + \b};
        
        \end{axis}
        \end{tikzpicture}
        \caption{MathVista}
        \label{fig:improvement_nmi:mathvista}
    \end{subfigure}
    \begin{subfigure}[t]{.495\textwidth}
        \centering
        \scalefont{0.8}
        \begin{tikzpicture}
        \begin{axis}[
          width=\linewidth,
          width=7cm, height=4.5cm,
          xmin=0, xmax=0.5,
          ymin=0, ymax=0.1,
          xlabel={$\overline{\mathrm{NMI}}$},
          xticklabel style={font=\scriptsize},
          legend style={at={(0.5,1.15)}, anchor=south, legend columns=2, nodes={scale=0.8, transform shape},legend cell align={left}},
          ymajorgrids=true,
          grid style=dashed,
        ]
        
        \addplot[
          only marks, mark=o, mark size=3.2pt,
          draw=black, fill=ETHGray, line width=0.6pt
        ] coordinates {(0.02259082046024646, 0.023457898172323854)};
        
        \addplot[
          only marks, mark=square*, mark size=3.2pt,
          draw=black, fill=ETHPurple, line width=0.6pt
        ] coordinates {(0.039855736721699925, 0.021866840731070536)};
        
        \addplot[
          only marks, mark=triangle*, mark size=3.6pt,
          draw=black, fill=ETHRed, line width=0.6pt
        ] coordinates {(0.1343662345124931, 0.013136422976501305)};
        
        \addplot[
          only marks, mark=diamond*, mark size=3.6pt,
          draw=black, fill=ETHBronze, line width=0.6pt
        ] coordinates {(0.008241235443607615, 0.014197127937336879)};
        
        \addplot[
          only marks, mark=otimes*, mark size=3.6pt,
          draw=black, fill=ETHGreen, line width=0.6pt
        ] coordinates {(0.06336350628932781, 0.03235150130548303)};
        
        \addplot[
          only marks, mark=star, mark size=2.6pt,
          draw=black, fill=ETHPetrol, line width=0.6pt
        ] coordinates {(0.1497352914466856, 0.027945496083550958)};
        
        \addplot[
          only marks, mark=pentagon*, mark size=3.6pt,
          draw=black, fill=ETHBlue, line width=0.6pt
        ] coordinates {(0.17419624459518446, 0.028108681462141072)};
        
        \pgfplotstableread{
        x   y
0.02259082046024646 0.023457898172323854
0.039855736721699925 0.021866840731070536
0.1343662345124931 0.013136422976501305
0.008241235443607615 0.014197127937336879
0.06336350628932781 0.03235150130548303
0.1497352914466856 0.027945496083550958
0.17419624459518446 0.028108681462141072
        }\allpoints
        
        \pgfplotstablecreatecol[linear regression={y=y,x=x}]{regression}{\allpoints}
        \pgfmathsetmacro{\m}{\pgfplotstableregressiona}
        \pgfmathsetmacro{\b}{\pgfplotstableregressionb}
        
        \addplot[very thick, black, domain=0:1] {\m*x + \b};
        
        \end{axis}
        \end{tikzpicture}
        \caption{MathVision}
        \label{fig:improvement_nmi:mathvision}
    \end{subfigure}
    \begin{subfigure}[t]{.495\textwidth}
        \centering
        \scalefont{0.8}
        \begin{tikzpicture}
        \begin{axis}[
          width=\linewidth,
          width=7cm, height=4.5cm,
          xmin=0, xmax=0.5,
          ymin=0, ymax=0.1,
          xlabel={$\overline{\mathrm{NMI}}$},
          ylabel={$\Delta A_{\mathrm{MV}}(16)$},
          xticklabel style={font=\scriptsize},
          legend style={at={(0.5,1.15)}, anchor=south, legend columns=2, nodes={scale=0.8, transform shape},legend cell align={left}},
          ymajorgrids=true,
          grid style=dashed,
        ]
        
        \addplot[
          only marks, mark=o, mark size=3.2pt,
          draw=black, fill=ETHGray, line width=0.6pt
        ] coordinates {(0.15879199656075815, 0.05585996955859973)};
        
        \addplot[
          only marks, mark=square*, mark size=3.2pt,
          draw=black, fill=ETHPurple, line width=0.6pt
        ] coordinates {(0.27400923033362734, 0.03194444444444455)};
        
        \addplot[
          only marks, mark=triangle*, mark size=3.6pt,
          draw=black, fill=ETHRed, line width=0.6pt
        ] coordinates {(0.4151015698496539, 0.014383561643835363)};
        
        \addplot[
          only marks, mark=diamond*, mark size=3.6pt,
          draw=black, fill=ETHBronze, line width=0.6pt
        ] coordinates {(0.08178600738319038, 0.08215372907153728)};
        
        \addplot[
          only marks, mark=otimes*, mark size=3.6pt,
          draw=black, fill=ETHGreen, line width=0.6pt
        ] coordinates {(0.279719349269899, 0.02566590563165916)};
        
        \addplot[
          only marks, mark=star, mark size=2.6pt,
          draw=black, fill=ETHPetrol, line width=0.6pt
        ] coordinates {(0.39980238891141406, 0.016914003044140036)};
        
        \addplot[
          only marks, mark=pentagon*, mark size=3.6pt,
          draw=black, fill=ETHBlue, line width=0.6pt
        ] coordinates {(0.41105837909282467, 0.012975646879756386)};
        
        \pgfplotstableread{
        x   y
0.15879199656075815 0.05585996955859973
0.27400923033362734 0.03194444444444455
0.4151015698496539 0.014383561643835363
0.08178600738319038 0.08215372907153728
0.279719349269899 0.02566590563165916
0.39980238891141406 0.016914003044140036
0.41105837909282467 0.012975646879756386
        }\allpoints
        
        \pgfplotstablecreatecol[linear regression={y=y,x=x}]{regression}{\allpoints}
        \pgfmathsetmacro{\m}{\pgfplotstableregressiona}
        \pgfmathsetmacro{\b}{\pgfplotstableregressionb}
        
        \addplot[very thick, black, domain=0:1] {\m*x + \b};
        
        \end{axis}
        \end{tikzpicture}
        \caption{TQA}
        \label{fig:improvement_nmi:tqa}
    \end{subfigure}
    \begin{subfigure}[t]{.495\textwidth}
        \centering
        \scalefont{0.8}
        \begin{tikzpicture}
        \begin{axis}[
          width=\linewidth,
          width=7cm, height=4.5cm,
          xmin=0, xmax=0.5,
          ymin=0, ymax=0.1,
          xlabel={$\overline{\mathrm{NMI}}$},
          xticklabel style={font=\scriptsize},
          legend style={at={(0.5,1.15)}, anchor=south, legend columns=2, nodes={scale=0.8, transform shape},legend cell align={left}},
          ymajorgrids=true,
          grid style=dashed,
        ]
        
        \addplot[
          only marks, mark=o, mark size=3.2pt,
          draw=black, fill=ETHGray, line width=0.6pt
        ] coordinates {(0.15466061033373482, 0.051437778879524054)};
        
        \addplot[
          only marks, mark=square*, mark size=3.2pt,
          draw=black, fill=ETHPurple, line width=0.6pt
        ] coordinates {(0.22921294389732752, 0.03256693108577091)};
        
        \addplot[
          only marks, mark=triangle*, mark size=3.6pt,
          draw=black, fill=ETHRed, line width=0.6pt
        ] coordinates {(0.43710518732658876, 0.007312840852751523)};
        
        \addplot[
          only marks, mark=diamond*, mark size=3.6pt,
          draw=black, fill=ETHBronze, line width=0.6pt
        ] coordinates {(0.10136534695636255, 0.059866137828457955)};
        
        \addplot[
          only marks, mark=otimes*, mark size=3.6pt,
          draw=black, fill=ETHGreen, line width=0.6pt
        ] coordinates {(0.3106278001026787, 0.018808874566187295)};
        
        \addplot[
          only marks, mark=star, mark size=2.6pt,
          draw=black, fill=ETHPetrol, line width=0.6pt
        ] coordinates {(0.4437782926073149, 0.009234010907288037)};
        
        \addplot[
          only marks, mark=pentagon*, mark size=3.6pt,
          draw=black, fill=ETHBlue, line width=0.6pt
        ] coordinates {(0.4866648638901999, 0.008335399107585517)};
        
        \pgfplotstableread{
        x   y
0.15466061033373482 0.051437778879524054
0.22921294389732752 0.03256693108577091
0.43710518732658876 0.007312840852751523
0.10136534695636255 0.059866137828457955
0.3106278001026787 0.018808874566187295
0.4437782926073149 0.009234010907288037
0.4866648638901999 0.008335399107585517
        }\allpoints
        
        \pgfplotstablecreatecol[linear regression={y=y,x=x}]{regression}{\allpoints}
        \pgfmathsetmacro{\m}{\pgfplotstableregressiona}
        \pgfmathsetmacro{\b}{\pgfplotstableregressionb}
        
        \addplot[very thick, black, domain=0:1] {\m*x + \b};
        
        \end{axis}
        \end{tikzpicture}
        \caption{ScienceQA}
        \label{fig:improvement_nmi:scienceqa}
    \end{subfigure}
    \begin{subfigure}[t]{.495\textwidth}
        \centering
        \scalefont{0.8}
        \begin{tikzpicture}
        \begin{axis}[
          width=\linewidth,
          width=7cm, height=4.5cm,
          xmin=0, xmax=0.5,
          ymin=0, ymax=0.1,
          xlabel={$\overline{\mathrm{NMI}}$},
          ylabel={$\Delta A_{\mathrm{MV}}(16)$},
          xticklabel style={font=\scriptsize},
          legend style={at={(0.5,1.15)}, anchor=south, legend columns=2, nodes={scale=0.8, transform shape},legend cell align={left}},
          ymajorgrids=true,
          grid style=dashed,
        ]
        
        \addplot[
          only marks, mark=o, mark size=3.2pt,
          draw=black, fill=ETHGray, line width=0.6pt
        ] coordinates {(0.10461348726527127, 0.03720833333333334)};
        
        \addplot[
          only marks, mark=square*, mark size=3.2pt,
          draw=black, fill=ETHPurple, line width=0.6pt
        ] coordinates {(0.1624938716462898, 0.03066666666666662)};
        
        \addplot[
          only marks, mark=triangle*, mark size=3.6pt,
          draw=black, fill=ETHRed, line width=0.6pt
        ] coordinates {(0.3114410705765362, 0.019750000000000156)};
        
        \addplot[
          only marks, mark=diamond*, mark size=3.6pt,
          draw=black, fill=ETHBronze, line width=0.6pt
        ] coordinates {(0.05483662197073346, 0.05079166666666668)};
        
        \addplot[
          only marks, mark=otimes*, mark size=3.6pt,
          draw=black, fill=ETHGreen, line width=0.6pt
        ] coordinates {(0.22235049098516607, 0.02616666666666656)};
        
        \addplot[
          only marks, mark=star, mark size=2.6pt,
          draw=black, fill=ETHPetrol, line width=0.6pt
        ] coordinates {(0.2851949458699199, 0.0189583333333333)};
        
        \addplot[
          only marks, mark=pentagon*, mark size=3.6pt,
          draw=black, fill=ETHBlue, line width=0.6pt
        ] coordinates {(0.35425011574486026, 0.017458333333333242)};
        
        \pgfplotstableread{
        x   y
0.10461348726527127 0.03720833333333334
0.1624938716462898 0.03066666666666662
0.3114410705765362 0.019750000000000156
0.05483662197073346 0.05079166666666668
0.22235049098516607 0.02616666666666656
0.2851949458699199 0.0189583333333333
0.35425011574486026 0.017458333333333242
        }\allpoints
        
        \pgfplotstablecreatecol[linear regression={y=y,x=x}]{regression}{\allpoints}
        \pgfmathsetmacro{\m}{\pgfplotstableregressiona}
        \pgfmathsetmacro{\b}{\pgfplotstableregressionb}
        
        \addplot[very thick, black, domain=0:1] {\m*x + \b};
        
        \end{axis}
        \end{tikzpicture}
        \caption{MMStar}
        \label{fig:improvement_nmi:mmstar}
    \end{subfigure}
    \begin{subfigure}[t]{.495\textwidth}
        \centering
        \scalefont{0.8}
        \begin{tikzpicture}
        \begin{axis}[
          width=\linewidth,
          width=7cm, height=4.5cm,
          xmin=0, xmax=0.5,
          ymin=0, ymax=0.1,
          xlabel={$\overline{\mathrm{NMI}}$},
          xticklabel style={font=\scriptsize},
          legend style={at={(0.5,1.15)}, anchor=south, legend columns=2, nodes={scale=0.8, transform shape},legend cell align={left}},
          ymajorgrids=true,
          grid style=dashed,
        ]
        
        \addplot[
          only marks, mark=o, mark size=3.2pt,
          draw=black, fill=ETHGray, line width=0.6pt
        ] coordinates {(0.0861867362694768, 0.04673913043478267)};
        
        \addplot[
          only marks, mark=square*, mark size=3.2pt,
          draw=black, fill=ETHPurple, line width=0.6pt
        ] coordinates {(0.1553276086750733, 0.02934782608695652)};
        
        \addplot[
          only marks, mark=triangle*, mark size=3.6pt,
          draw=black, fill=ETHRed, line width=0.6pt
        ] coordinates {(0.2572348187699728, 0.02546583850931672)};
        
        \addplot[
          only marks, mark=diamond*, mark size=3.6pt,
          draw=black, fill=ETHBronze, line width=0.6pt
        ] coordinates {(0.037761875912948266, 0.04479813664596277)};
        
        \addplot[
          only marks, mark=otimes*, mark size=3.6pt,
          draw=black, fill=ETHGreen, line width=0.6pt
        ] coordinates {(0.16238464104305833, 0.02111801242236022)};
        
        \addplot[
          only marks, mark=star, mark size=2.6pt,
          draw=black, fill=ETHPetrol, line width=0.6pt
        ] coordinates {(0.2961227520348752, 0.01622670807453408)};
        
        \addplot[
          only marks, mark=pentagon*, mark size=3.6pt,
          draw=black, fill=ETHBlue, line width=0.6pt
        ] coordinates {(0.2996938360655412, 0.025931677018633637)};
        
        \pgfplotstableread{
        x   y
0.0861867362694768 0.04673913043478267
0.1553276086750733 0.02934782608695652
0.2572348187699728 0.02546583850931672
0.037761875912948266 0.04479813664596277
0.16238464104305833 0.02111801242236022
0.2961227520348752 0.01622670807453408
0.2996938360655412 0.025931677018633637
        }\allpoints
        
        \pgfplotstablecreatecol[linear regression={y=y,x=x}]{regression}{\allpoints}
        \pgfmathsetmacro{\m}{\pgfplotstableregressiona}
        \pgfmathsetmacro{\b}{\pgfplotstableregressionb}
        
        \addplot[very thick, black, domain=0:1] {\m*x + \b};
        
        \end{axis}
        \end{tikzpicture}
        \caption{MMMU}
        \label{fig:improvement_nmi:mmmu}
    \end{subfigure}
    \caption{Majority voting improvement $\Delta A_{\mathrm{MV}}(16)$ plotted against average pairwise normalized mutual information ($\overline{\mathrm{NMI}}$) for each model on each dataset. A negative trend suggests that higher prediction dependency reduces the benefit of majority voting.}
    \label{fig:improvement_nmi:all}
\end{figure}

%% file: figures/improvement_corr_all.tex
\begin{figure}[!ht]
\vspace{.8cm}
    \centering
    \begin{subfigure}[t]{.495\textwidth}
        \centering
        \scalefont{0.8}
        \begin{tikzpicture}
        \begin{axis}[
          width=\linewidth,
          width=7cm, height=4.5cm,
          xmin=0.4, xmax=1.0,
          ymin=0, ymax=0.1,
          xlabel={$\overline{\rho}$},
          ylabel={$\Delta A_{\mathrm{MV}}(16)$},
          xticklabel style={font=\scriptsize},
          legend style={at={(2.,1.3)}, legend columns=8, nodes={scale=0.8, transform shape},/tikz/overlay,},
          ymajorgrids=true,
          grid style=dashed,
        ]
        
        \addplot[
          only marks, mark=o, mark size=3.2pt,
          draw=black, fill=ETHGray, line width=0.6pt
        ] coordinates {(0.5449761036644336, 0.028472222222222232)};
        \addlegendentry{LLaMA}
        
        \addplot[
          only marks, mark=square*, mark size=3.2pt,
          draw=black, fill=ETHPurple, line width=0.6pt
        ] coordinates {(0.5546185741946377, 0.04166666666666652)};
        \addlegendentry{Gemma}
        
        \addplot[
          only marks, mark=triangle*, mark size=3.6pt,
          draw=black, fill=ETHRed, line width=0.6pt
        ] coordinates {(0.6811205660842521, 0.03333333333333344)};
        \addlegendentry{Pixtral}
        
        \addplot[
          only marks, mark=diamond*, mark size=3.6pt,
          draw=black, fill=ETHBronze, line width=0.6pt
        ] coordinates {(0.4787017305897051, 0.08032407407407405)};
        \addlegendentry{Qwen-3B}
        
        \addplot[
          only marks, mark=otimes*, mark size=3.6pt,
          draw=black, fill=ETHGreen, line width=0.6pt
        ] coordinates {(0.7198648121890151, 0.05196759259259254)};
        \addlegendentry{Qwen-7B}
        
        \addplot[
          only marks, mark=star, mark size=2.6pt,
          draw=black, fill=ETHPetrol, line width=0.6pt
        ] coordinates {(0.8067338894054553, 0.03715277777777781)};
        \addlegendentry{Qwen-32B}
        
        \addplot[
          only marks, mark=pentagon*, mark size=3.6pt,
          draw=black, fill=ETHBlue, line width=0.6pt
        ] coordinates {(0.7992185067827278, 0.027430555555555514)};
        \addlegendentry{Qwen-72B}
        
        \pgfplotstableread{
        x   y
0.5449761036644336 0.028472222222222232
0.5546185741946377 0.04166666666666652
0.6811205660842521 0.03333333333333344
0.4787017305897051 0.08032407407407405
0.7198648121890151 0.05196759259259254
0.8067338894054553 0.03715277777777781
0.7992185067827278 0.027430555555555514
        }\allpoints
        
        \pgfplotstablecreatecol[linear regression={y=y,x=x}]{regression}{\allpoints}
        \pgfmathsetmacro{\m}{\pgfplotstableregressiona}
        \pgfmathsetmacro{\b}{\pgfplotstableregressionb}
        
        \addplot[very thick, black, domain=0:1] {\m*x + \b};
        
        \end{axis}
        \end{tikzpicture}
        \caption{MathVista}
        \label{fig:improvement_corr:mathvista}
    \end{subfigure}
    \begin{subfigure}[t]{.495\textwidth}
        \centering
        \scalefont{0.8}
        \begin{tikzpicture}
        \begin{axis}[
          width=\linewidth,
          width=7cm, height=4.5cm,
          xmin=0.4, xmax=1.0,
          ymin=0, ymax=0.1,
          xlabel={$\overline{\rho}$},
          xticklabel style={font=\scriptsize},
          legend style={at={(0.5,1.15)}, anchor=south, legend columns=2, nodes={scale=0.8, transform shape},legend cell align={left}},
          ymajorgrids=true,
          grid style=dashed,
        ]
        
        \addplot[
          only marks, mark=o, mark size=3.2pt,
          draw=black, fill=ETHGray, line width=0.6pt
        ] coordinates {(0.6069284023788312, 0.023457898172323854)};
        
        \addplot[
          only marks, mark=square*, mark size=3.2pt,
          draw=black, fill=ETHPurple, line width=0.6pt
        ] coordinates {(0.6559951638675384, 0.021866840731070536)};
        
        \addplot[
          only marks, mark=triangle*, mark size=3.6pt,
          draw=black, fill=ETHRed, line width=0.6pt
        ] coordinates {(0.6601019758562058, 0.013136422976501305)};
        
        \addplot[
          only marks, mark=diamond*, mark size=3.6pt,
          draw=black, fill=ETHBronze, line width=0.6pt
        ] coordinates {(0.566734702522583, 0.014197127937336879)};
        
        \addplot[
          only marks, mark=otimes*, mark size=3.6pt,
          draw=black, fill=ETHGreen, line width=0.6pt
        ] coordinates {(0.631474691236384, 0.03235150130548303)};
        
        \addplot[
          only marks, mark=star, mark size=2.6pt,
          draw=black, fill=ETHPetrol, line width=0.6pt
        ] coordinates {(0.6551299674220367, 0.027945496083550958)};
        
        \addplot[
          only marks, mark=pentagon*, mark size=3.6pt,
          draw=black, fill=ETHBlue, line width=0.6pt
        ] coordinates {(0.7153838682998538, 0.028108681462141072)};

        \pgfplotstableread{
        x   y
0.6069284023788312 0.023457898172323854
0.6559951638675384 0.021866840731070536
0.6601019758562058 0.013136422976501305
0.566734702522583 0.014197127937336879
0.631474691236384 0.03235150130548303
0.6551299674220367 0.027945496083550958
0.7153838682998538 0.028108681462141072
        }\allpoints
        
        \pgfplotstablecreatecol[linear regression={y=y,x=x}]{regression}{\allpoints}
        \pgfmathsetmacro{\m}{\pgfplotstableregressiona}
        \pgfmathsetmacro{\b}{\pgfplotstableregressionb}
        
        \addplot[very thick, black, domain=0:1] {\m*x + \b};
        
        \end{axis}
        \end{tikzpicture}
        \caption{MathVision}
        \label{fig:improvement_corr:mathvision}
    \end{subfigure}
    \begin{subfigure}[t]{.495\textwidth}
        \centering
        \scalefont{0.8}
        \begin{tikzpicture}
        \begin{axis}[
          width=\linewidth,
          width=7cm, height=4.5cm,
          xmin=0.4, xmax=1.0,
          ymin=0, ymax=0.1,
          xlabel={$\overline{\rho}$},
          ylabel={$\Delta A_{\mathrm{MV}}(16)$},
          xticklabel style={font=\scriptsize},
          legend style={at={(0.5,1.15)}, anchor=south, legend columns=2, nodes={scale=0.8, transform shape},legend cell align={left}},
          ymajorgrids=true,
          grid style=dashed,
        ]
        
        \addplot[
          only marks, mark=o, mark size=3.2pt,
          draw=black, fill=ETHGray, line width=0.6pt
        ] coordinates {(0.6949457797515505, 0.05585996955859973)};
        
        \addplot[
          only marks, mark=square*, mark size=3.2pt,
          draw=black, fill=ETHPurple, line width=0.6pt
        ] coordinates {(0.8009648475881782, 0.03194444444444455)};
        
        \addplot[
          only marks, mark=triangle*, mark size=3.6pt,
          draw=black, fill=ETHRed, line width=0.6pt
        ] coordinates {(0.8631955443618481, 0.014383561643835363)};
        
        \addplot[
          only marks, mark=diamond*, mark size=3.6pt,
          draw=black, fill=ETHBronze, line width=0.6pt
        ] coordinates {(0.5705405383768252, 0.08215372907153728)};
        
        \addplot[
          only marks, mark=otimes*, mark size=3.6pt,
          draw=black, fill=ETHGreen, line width=0.6pt
        ] coordinates {(0.8082136969870573, 0.02566590563165916)};
        
        \addplot[
          only marks, mark=star, mark size=2.6pt,
          draw=black, fill=ETHPetrol, line width=0.6pt
        ] coordinates {(0.8786113800867235, 0.016914003044140036)};
        
        \addplot[
          only marks, mark=pentagon*, mark size=3.6pt,
          draw=black, fill=ETHBlue, line width=0.6pt
        ] coordinates {(0.8861695002475114, 0.012975646879756386)};
        
        \pgfplotstableread{
        x   y
0.6949457797515505 0.05585996955859973
0.8009648475881782 0.03194444444444455
0.8631955443618481 0.014383561643835363
0.5705405383768252 0.08215372907153728
0.8082136969870573 0.02566590563165916
0.8786113800867235 0.016914003044140036
0.8861695002475114 0.012975646879756386
        }\allpoints
        
        \pgfplotstablecreatecol[linear regression={y=y,x=x}]{regression}{\allpoints}
        \pgfmathsetmacro{\m}{\pgfplotstableregressiona}
        \pgfmathsetmacro{\b}{\pgfplotstableregressionb}
        
        \addplot[very thick, black, domain=0:1] {\m*x + \b};
        
        \end{axis}
        \end{tikzpicture}
        \caption{TQA}
        \label{fig:improvement_corr:tqa}
    \end{subfigure}
    \begin{subfigure}[t]{.495\textwidth}
        \centering
        \scalefont{0.8}
        \begin{tikzpicture}
        \begin{axis}[
          width=\linewidth,
          width=7cm, height=4.5cm,
          xmin=0.4, xmax=1.0,
          ymin=0, ymax=0.1,
          xlabel={$\overline{\rho}$},
          xticklabel style={font=\scriptsize},
          legend style={at={(0.5,1.15)}, anchor=south, legend columns=2, nodes={scale=0.8, transform shape},legend cell align={left}},
          ymajorgrids=true,
          grid style=dashed,
        ]
        
        \addplot[
          only marks, mark=o, mark size=3.2pt,
          draw=black, fill=ETHGray, line width=0.6pt
        ] coordinates {(0.6896679246607068, 0.051437778879524054)};
        
        \addplot[
          only marks, mark=square*, mark size=3.2pt,
          draw=black, fill=ETHPurple, line width=0.6pt
        ] coordinates {(0.7481229693538438, 0.03256693108577091)};
        
        \addplot[
          only marks, mark=triangle*, mark size=3.6pt,
          draw=black, fill=ETHRed, line width=0.6pt
        ] coordinates {(0.8459557484151924, 0.007312840852751523)};
        
        \addplot[
          only marks, mark=diamond*, mark size=3.6pt,
          draw=black, fill=ETHBronze, line width=0.6pt
        ] coordinates {(0.551538289243197, 0.059866137828457955)};
        
        \addplot[
          only marks, mark=otimes*, mark size=3.6pt,
          draw=black, fill=ETHGreen, line width=0.6pt
        ] coordinates {(0.8000181320586459, 0.018808874566187295)};
        
        \addplot[
          only marks, mark=star, mark size=2.6pt,
          draw=black, fill=ETHPetrol, line width=0.6pt
        ] coordinates {(0.8809922997535357, 0.009234010907288037)};
        
        \addplot[
          only marks, mark=pentagon*, mark size=3.6pt,
          draw=black, fill=ETHBlue, line width=0.6pt
        ] coordinates {(0.8888203412657188, 0.008335399107585517)};
        
        \pgfplotstableread{
        x   y
0.6896679246607068 0.051437778879524054
0.7481229693538438 0.03256693108577091
0.8459557484151924 0.007312840852751523
0.551538289243197 0.059866137828457955
0.8000181320586459 0.018808874566187295
0.8809922997535357 0.009234010907288037
0.8888203412657188 0.008335399107585517
        }\allpoints
        
        \pgfplotstablecreatecol[linear regression={y=y,x=x}]{regression}{\allpoints}
        \pgfmathsetmacro{\m}{\pgfplotstableregressiona}
        \pgfmathsetmacro{\b}{\pgfplotstableregressionb}
        
        \addplot[very thick, black, domain=0:1] {\m*x + \b};
        
        \end{axis}
        \end{tikzpicture}
        \caption{ScienceQA}
        \label{fig:improvement_corr:scienceqa}
    \end{subfigure}
    \begin{subfigure}[t]{.495\textwidth}
        \centering
        \scalefont{0.8}
        \begin{tikzpicture}
        \begin{axis}[
          width=\linewidth,
          width=7cm, height=4.5cm,
          xmin=0.4, xmax=1.0,
          ymin=0, ymax=0.1,
          xlabel={$\overline{\rho}$},
          ylabel={$\Delta A_{\mathrm{MV}}(16)$},
          xticklabel style={font=\scriptsize},
          legend style={at={(0.5,1.15)}, anchor=south, legend columns=2, nodes={scale=0.8, transform shape},legend cell align={left}},
          ymajorgrids=true,
          grid style=dashed,
        ]
        
        \addplot[
          only marks, mark=o, mark size=3.2pt,
          draw=black, fill=ETHGray, line width=0.6pt
        ] coordinates {(0.5395095389761869, 0.03720833333333334)};
        
        \addplot[
          only marks, mark=square*, mark size=3.2pt,
          draw=black, fill=ETHPurple, line width=0.6pt
        ] coordinates {(0.6055330133638783, 0.03066666666666662)};
        
        \addplot[
          only marks, mark=triangle*, mark size=3.6pt,
          draw=black, fill=ETHRed, line width=0.6pt
        ] coordinates {(0.7371498700649902, 0.019750000000000156)};
        
        \addplot[
          only marks, mark=diamond*, mark size=3.6pt,
          draw=black, fill=ETHBronze, line width=0.6pt
        ] coordinates {(0.4474045359139685, 0.05079166666666668)};
        
        \addplot[
          only marks, mark=otimes*, mark size=3.6pt,
          draw=black, fill=ETHGreen, line width=0.6pt
        ] coordinates {(0.6977003903938003, 0.02616666666666656)};
        
        \addplot[
          only marks, mark=star, mark size=2.6pt,
          draw=black, fill=ETHPetrol, line width=0.6pt
        ] coordinates {(0.7350099377528356, 0.0189583333333333)};
        
        \addplot[
          only marks, mark=pentagon*, mark size=3.6pt,
          draw=black, fill=ETHBlue, line width=0.6pt
        ] coordinates {(0.7875507779823582, 0.017458333333333242)};
        
        \pgfplotstableread{
        x   y
0.5395095389761869 0.03720833333333334
0.6055330133638783 0.03066666666666662
0.7371498700649902 0.019750000000000156
0.4474045359139685 0.05079166666666668
0.6977003903938003 0.02616666666666656
0.7350099377528356 0.0189583333333333
0.7875507779823582 0.017458333333333242
        }\allpoints
        
        \pgfplotstablecreatecol[linear regression={y=y,x=x}]{regression}{\allpoints}
        \pgfmathsetmacro{\m}{\pgfplotstableregressiona}
        \pgfmathsetmacro{\b}{\pgfplotstableregressionb}
        
        \addplot[very thick, black, domain=0:1] {\m*x + \b};
        
        \end{axis}
        \end{tikzpicture}
        \caption{MMStar}
        \label{fig:improvement_corr:mmstar}
    \end{subfigure}
    \begin{subfigure}[t]{.495\textwidth}
        \centering
        \scalefont{0.8}
        \begin{tikzpicture}
        \begin{axis}[
          width=\linewidth,
          width=7cm, height=4.5cm,
          xmin=0.4, xmax=1.0,
          ymin=0, ymax=0.1,
          xlabel={$\overline{\rho}$},
          xticklabel style={font=\scriptsize},
          legend style={at={(0.5,1.15)}, anchor=south, legend columns=2, nodes={scale=0.8, transform shape},legend cell align={left}},
          ymajorgrids=true,
          grid style=dashed,
        ]
        
        \addplot[
          only marks, mark=o, mark size=3.2pt,
          draw=black, fill=ETHGray, line width=0.6pt
        ] coordinates {(0.5277817141225475, 0.04673913043478267)};
        
        \addplot[
          only marks, mark=square*, mark size=3.2pt,
          draw=black, fill=ETHPurple, line width=0.6pt
        ] coordinates {(0.5907682313722583, 0.02934782608695652)};
        
        \addplot[
          only marks, mark=triangle*, mark size=3.6pt,
          draw=black, fill=ETHRed, line width=0.6pt
        ] coordinates {(0.7035297873481616, 0.02546583850931672)};
        
        \addplot[
          only marks, mark=diamond*, mark size=3.6pt,
          draw=black, fill=ETHBronze, line width=0.6pt
        ] coordinates {(0.45313883487187423, 0.04479813664596277)};
        
        \addplot[
          only marks, mark=otimes*, mark size=3.6pt,
          draw=black, fill=ETHGreen, line width=0.6pt
        ] coordinates {(0.6303151478420292, 0.02111801242236022)};
        
        \addplot[
          only marks, mark=star, mark size=2.6pt,
          draw=black, fill=ETHPetrol, line width=0.6pt
        ] coordinates {(0.7397389017794682, 0.01622670807453408)};
        
        \addplot[
          only marks, mark=pentagon*, mark size=3.6pt,
          draw=black, fill=ETHBlue, line width=0.6pt
        ] coordinates {(0.7755993933051787, 0.025931677018633637)};
        
        \pgfplotstableread{
        x   y
0.5277817141225475 0.04673913043478267
0.5907682313722583 0.02934782608695652
0.7035297873481616 0.02546583850931672
0.45313883487187423 0.04479813664596277
0.6303151478420292 0.02111801242236022
0.7397389017794682 0.01622670807453408
0.7755993933051787 0.025931677018633637
        }\allpoints
        
        \pgfplotstablecreatecol[linear regression={y=y,x=x}]{regression}{\allpoints}
        \pgfmathsetmacro{\m}{\pgfplotstableregressiona}
        \pgfmathsetmacro{\b}{\pgfplotstableregressionb}
        
        \addplot[very thick, black, domain=0:1] {\m*x + \b};
        
        \end{axis}
        \end{tikzpicture}
        \caption{MMMU}
        \label{fig:improvement_corr:mmmu}
    \end{subfigure}
    \caption{Majority voting improvement $\Delta A_{\mathrm{MV}}(16)$ versus average pairwise accuracy correlation ($\overline{\rho}$). Consistent with theory, stronger dependency (i.e., higher $\overline{\rho}$) corresponds to smaller gains from majority voting.}
    \label{fig:improvement_corr:all}
\end{figure}

%% file: figures/assumption_all.tex
\begin{figure}[!ht]
    \centering
    \begin{subfigure}{.48\textwidth}
        \centering
        \scalefont{0.8}
        \begin{tikzpicture}
        \begin{axis}[
            width=7cm, height=4.5cm,
            ylabel={$\widetilde{H}_u$ / Accuracy},
            xlabel near ticks,
            ylabel near ticks,
            x tick label style={rotate=30, anchor=east},
            xtick=data,
            xticklabel style={font=\scriptsize},
            symbolic x coords={Qwen-3B, LLaMA, Pixtral, Gemma, Qwen-7B, Qwen-32B, Qwen-72B},
            legend style={at={(0.5,1.15)}, anchor=south, legend columns=7, nodes={scale=0.8, transform shape}},
            ymajorgrids=true,
            grid style=dashed,
        ]
        \addplot+[line width=0.3mm, mark size=1pt, color=ETHBlue] coordinates {
(Qwen-3B, 0.5194444444444444)
(LLaMA, 0.5203703703703704)
(Pixtral, 0.560300925925926)
(Gemma, 0.6503472222222222)
(Qwen-7B, 0.7208333333333333)
(Qwen-32B, 0.7857638888888888)
(Qwen-72B, 0.805787037037037)
        };
        \addlegendentry{Accuracy}
        \addplot+[line width=0.3mm, mark size=1pt, color=ETHPurple] coordinates {
(Qwen-3B, 0.5610214442606392)
(LLaMA, 0.477100606953641)
(Pixtral, 0.43812847503851154)
(Gemma, 0.31909536228134294)
(Qwen-7B, 0.28832551989156546)
(Qwen-32B, 0.18202658995058338)
(Qwen-72B, 0.1707680344509399)
        };
        \addlegendentry{$\widetilde{H}_u$}
        \end{axis}
        \end{tikzpicture}
        \caption{MathVista}
        \label{fig:hvsacc:mathvista}
    \end{subfigure}
    \begin{subfigure}{.48\textwidth}
        \centering
        \scalefont{0.8}
        \begin{tikzpicture}
        \begin{axis}[
            width=7cm, height=4.5cm,
            ylabel={$\widetilde{H}_u$ / Accuracy},
            xlabel near ticks,
            ylabel near ticks,
            xtick=data,
            x tick label style={rotate=30, anchor=east},
            xticklabel style={font=\scriptsize},
            symbolic x coords={Qwen-3B, LLaMA, Pixtral, Qwen-7B, Gemma, Qwen-32B, Qwen-72B},
            legend style={at={(0.5,1.15)}, anchor=south, legend columns=7, nodes={scale=0.8, transform shape}},
            ymajorgrids=true,
            grid style=dashed,
        ]
        \addplot+[line width=0.3mm, mark size=1pt, color=ETHBlue] coordinates {
(Qwen-3B, 0.2227072454308094)
(LLaMA, 0.23413022193211488)
(Pixtral, 0.25199902088772846)
(Qwen-7B, 0.30177056135770236)
(Gemma, 0.3183746736292428)
(Qwen-32B, 0.38797323759791125)
(Qwen-72B, 0.42889197127937334)
        };
        \addlegendentry{Accuracy}
        \addplot+[line width=0.3mm, mark size=1pt, color=ETHPurple] coordinates {
(Qwen-3B, 0.6510414917202695)
(LLaMA, 0.5986063905973528)
(Pixtral, 0.5624257395543766)
(Qwen-7B, 0.5188211468461807)
(Gemma, 0.44281018585924997)
(Qwen-32B, 0.4183311082343801)
(Qwen-72B, 0.3803737151357536)
        };
        \addlegendentry{$\widetilde{H}_u$}
        \end{axis}
        \end{tikzpicture}
        \caption{MathVision}
        \label{fig:hvsacc:mathvision}
    \end{subfigure}
    \begin{subfigure}{.48\textwidth}
        \centering
        \scalefont{0.8}
        \begin{tikzpicture}
        \begin{axis}[
            width=7cm, height=4.5cm,
            ylabel={$\widetilde{H}_u$ / Accuracy},
            xlabel near ticks,
            ylabel near ticks,
            xtick=data,
            x tick label style={rotate=30, anchor=east},
            xticklabel style={font=\scriptsize},
            symbolic x coords={Qwen-3B, LLaMA, Pixtral, Qwen-7B, Gemma, Qwen-32B, Qwen-72B},
            legend style={at={(0.5,1.15)}, anchor=south, legend columns=7, nodes={scale=0.8, transform shape}},
            ymajorgrids=true,
            grid style=dashed,
        ]
        \addplot+[line width=0.3mm, mark size=1pt, color=ETHBlue] coordinates {
(Qwen-3B, 0.6085235920852359)
(LLaMA, 0.7040715372907154)
(Pixtral, 0.7734208523592085)
(Qwen-7B, 0.785007610350076)
(Gemma, 0.7885844748858448)
(Qwen-32B, 0.8305745814307458)
(Qwen-72B, 0.8452435312024353)
        };
        \addplot+[line width=0.3mm, mark size=1pt, color=ETHPurple] coordinates {
(Qwen-3B, 0.41193610185881857)
(LLaMA, 0.2786298609137217)
(Pixtral, 0.17296221031584375)
(Qwen-7B, 0.1661429419207665)
(Gemma, 0.11334712800159344)
(Qwen-32B, 0.1021751962285444)
(Qwen-72B, 0.09323779493803595)
        };
        \end{axis}
        \end{tikzpicture}
        \caption{TQA}
        \label{fig:hvsacc:tqa}
    \end{subfigure}
    \begin{subfigure}{.48\textwidth}
        \centering
        \scalefont{0.8}
        \begin{tikzpicture}
        \begin{axis}[
            width=7cm, height=4.5cm,
            ylabel={$\widetilde{H}_u$ / Accuracy},
            xlabel near ticks,
            ylabel near ticks,
            xtick=data,
            x tick label style={rotate=30, anchor=east},
            xticklabel style={font=\scriptsize},
            symbolic x coords={Qwen-3B, LLaMA, Pixtral, Qwen-7B, Gemma, Qwen-32B, Qwen-72B},
            legend style={at={(0.5,1.15)}, anchor=south, legend columns=7, nodes={scale=0.8, transform shape}},
            ymajorgrids=true,
            grid style=dashed,
        ]
        \addplot+[line width=0.3mm, mark size=1pt, color=ETHBlue] coordinates {
(Qwen-3B, 0.6666769955379277)
(LLaMA, 0.7784147248388696)
(Pixtral, 0.7831866633614278)
(Qwen-7B, 0.7975954387704511)
(Gemma, 0.7783217649975211)
(Qwen-32B, 0.8420922161626178)
(Qwen-72B, 0.8464303420922161)
        };
        \addplot+[line width=0.3mm, mark size=1pt, color=ETHPurple] coordinates {
(Qwen-3B, 0.4333736318931796)
(LLaMA, 0.27218115603337417)
(Pixtral, 0.23063060535913832)
(Qwen-7B, 0.18840331958236753)
(Gemma, 0.1473228008112682)
(Qwen-32B, 0.11511642065829153)
(Qwen-72B, 0.10251710898100455)
        };
        \end{axis}
        \end{tikzpicture}
        \caption{ScienceQA}
        \label{fig:hvsacc:scienceqa}
    \end{subfigure}
    \begin{subfigure}{.48\textwidth}
        \centering
        \scalefont{0.8}
        \begin{tikzpicture}
        \begin{axis}[
            width=7cm, height=4.5cm,
            ylabel={$\widetilde{H}_u$ / Accuracy},
            xlabel near ticks,
            ylabel near ticks,
            xtick=data,
            x tick label style={rotate=30, anchor=east},
            xticklabel style={font=\scriptsize},
            symbolic x coords={Qwen-3B, LLaMA, Pixtral, Qwen-7B, Gemma, Qwen-32B, Qwen-72B},
            legend style={at={(0.5,1.15)}, anchor=south, legend columns=7, nodes={scale=0.8, transform shape}},
            ymajorgrids=true,
            grid style=dashed,
        ]
        \addplot+[line width=0.3mm, mark size=1pt, color=ETHBlue] coordinates {
(Qwen-3B, 0.41225)
(LLaMA, 0.460875)
(Pixtral, 0.5035)
(Qwen-7B, 0.5677083333333334)
(Gemma, 0.5339583333333333)
(Qwen-32B, 0.5633750000000001)
(Qwen-72B, 0.6255833333333334)
        };
        \addplot+[line width=0.3mm, mark size=1pt, color=ETHPurple] coordinates {
(Qwen-3B, 0.52351072403509)
(LLaMA, 0.4293925809755607)
(Pixtral, 0.36267603965471623)
(Qwen-7B, 0.2935391288906788)
(Gemma, 0.23967371219214964)
(Qwen-32B, 0.23833517106448646)
(Qwen-72B, 0.19023091761638874)
        };
        \end{axis}
        \end{tikzpicture}
        \caption{MMStar}
        \label{fig:hvsacc:mmstar}
    \end{subfigure}
    \begin{subfigure}{.48\textwidth}
        \centering
        \scalefont{0.8}
        \begin{tikzpicture}
        \begin{axis}[
            width=7cm, height=4.5cm,
            ylabel={$\widetilde{H}_u$ / Accuracy},
            xlabel near ticks,
            ylabel near ticks,
            xtick=data,
            x tick label style={rotate=30, anchor=east},
            xticklabel style={font=\scriptsize},
            symbolic x coords={Qwen-3B, LLaMA, Pixtral, Qwen-7B, Gemma, Qwen-32B, Qwen-72B},
            legend style={at={(0.5,1.15)}, anchor=south, legend columns=7, nodes={scale=0.8, transform shape}},
            ymajorgrids=true,
            grid style=dashed,
        ]
        \addplot+[line width=0.3mm, mark size=1pt, color=ETHBlue] coordinates {
(Qwen-3B, 0.37406832298136644)
(LLaMA, 0.42872670807453417)
(Pixtral, 0.4764751552795031)
(Qwen-7B, 0.5052795031055901)
(Gemma, 0.5249223602484472)
(Qwen-32B, 0.5903726708074535)
(Qwen-72B, 0.6418478260869565)
        };
        \addplot+[line width=0.3mm, mark size=1pt, color=ETHPurple] coordinates {
(Qwen-3B, 0.5870964701703758)
(LLaMA, 0.49615147426068584)
(Pixtral, 0.40994098682755964)
(Qwen-7B, 0.39733350724525984)
(Gemma, 0.3201634394253336)
(Qwen-32B, 0.270197942542754)
(Qwen-72B, 0.24305671238425539)
        };
        \end{axis}
        \end{tikzpicture}
        \caption{MMMU}
        \label{fig:hvsacc:mmmu}
    \end{subfigure}
    \caption{Correlation between normalized entropy $\widetilde{H}_u$ and accuracy across models on six benchmarks, supporting the Entropy–Accuracy Monotonicity assumption (\Cref{ass:entropy-accuracy}).}
    \label{fig:hvsacc:all}
\end{figure}

%% file: figures/appendix/ensemble_all.tex
\begin{table}[ht]
\small
\centering
\caption{\textbf{Robustness to Ensemble Composition.} Performance comparison (\%) of majority voting and ETTC across all pairwise and triplet combinations of the Qwen-2.5-VL family on MathVista. \textbf{Min/Max} denote the performance of the worst and best individual models in the ensemble. ETTC consistently outperforms the best individual model (Max) and voting, particularly in heterogeneous ensembles where weak models (e.g., 3B) degrade voting performance.}
\begin{tabular}{l|ccccc}
\toprule
\midrule
\textbf{Combination} & \textbf{Min.} & \textbf{Max.} & \textbf{Avg.} & \textbf{Voting} & \textbf{ETTC} \\
\midrule
3B, 7B & 51.94 & 72.08 & 62.01 & 69.81 & \textbf{79.26} \\
3B, 32B & 51.94 & 78.58 & 65.26 & 72.04 & \textbf{83.33} \\
3B, 72B & 51.94 & 80.58 & 66.26 & 73.15 & \textbf{84.81} \\
7B, 32B & 72.08 & 78.58 & 75.33 & 81.48 & \textbf{82.78} \\
7B, 72B & 72.08 & 80.58 & 76.33 & 82.22 & \textbf{84.63} \\
32B, 72B & 78.58 & 80.58 & 79.58 & \textbf{84.44} & 84.26 \\
3B, 7B, 32B & 51.94 & 78.58 & 67.53 & 81.30 & \textbf{82.41} \\
3B, 7B, 72B & 51.94 & 80.58 & 68.20 & 82.22 & \textbf{84.44} \\
3B, 32B, 72B & 51.94 & 80.58 & 70.37 & 83.70 & \textbf{84.63} \\
7B, 32B, 72B & 72.08 & 80.58 & 77.08 & 83.70 & \textbf{84.26} \\
\midrule
\bottomrule
\end{tabular}
\label{tab:ensemble_all}
\end{table}

%% file: figures/appendix/results_thinking.tex
\begin{table}[ht]
\small
\centering
\caption{\textbf{Generalization to Thinking LLMs.} Performance of ETTC versus majority voting on text-only reasoning benchmarks (ARC-Easy, MMLU-Pro) using Qwen-3-Thinking models. ETTC consistently outperforms voting and the best individual model (Max) across diverse ensemble sizes, confirming that entropy-based selection remains effective for pure language reasoning.}
\begin{tabular}{l|l|ccccc}
\toprule
\midrule
\textbf{Dataset} & \textbf{Models} & \textbf{Min.} & \textbf{Max.} & \textbf{Avg.} & \textbf{Voting} & \textbf{ETTC} \\
\midrule
\multirow{2}{*}{ARC-Easy} 
& 4B, 30B & 0.9599 & 0.9714 & 0.9656 & 0.9769 & \textbf{0.9878} \\
& 4B, 235B & 0.9599 & 0.9772 & 0.9686 & 0.9769 & \textbf{0.9878} \\
& 30B, 235B & 0.9714 & 0.9772 & 0.9743 & 0.9874 & \textbf{0.9895} \\
& 4B, 30B, 235B & 0.9599 & 0.9772 & 0.9695 & 0.9891 & \textbf{0.9899} \\
\midrule
\multirow{2}{*}{MMLU-Pro} 
& 4B, 30B & 0.8116 & 0.9412 & 0.8764 & 0.8934 & \textbf{0.9408} \\
& 4B, 235B & 0.8116 & 0.9431 & 0.8773 & 0.8979 & \textbf{0.9467} \\
& 30B, 235B & 0.9412 & 0.9431 & 0.9421 & 0.9504 & \textbf{0.9519} \\
& 4B, 30B, 235B & 0.8116 & 0.9431 & 0.8986 & 0.9482 & \textbf{0.9482} \\
\midrule
\bottomrule
\end{tabular}
\label{tab:thinking_llm}
\end{table}

%% file: figures/results_supervised.tex
\begin{table}[!ht]
\small
\centering
\caption{Evaluation results across datasets for \textbf{Similar Size Models} and \textbf{Same Family Models}. Columns show the average single-model accuracy (Average), Voting, (unsupervised) ETTC, and supervised variant of ETTC.}
\begin{tabular}{lcccccccc}
\toprule
\midrule
\multirow{2}{*}{\textbf{Accuracy \%}} 
& \multicolumn{4}{c}{\textbf{Similar Size Models}} 
& \multicolumn{4}{c}{\textbf{Same Family Models}} \\
\cmidrule(lr){2-5} \cmidrule(lr){6-9}
& Avg. & Voting & ETTC & Sup.~ETTC$_{\Delta}$ 
& Avg. & Voting & ETTC & Sup.~ETTC$_{\Delta}$ \\
\midrule
MathVista   & 61.30 & 68.33 & 75.93 & \increase{79.63}{3.70}
            & 70.80 & 83.15 & 84.44 & \increase{84.81}{0.37} \\
MathVision  & 27.66 & 32.05 & 35.57 & \increase{36.62}{1.05}
            & 33.53 & 41.32 & 44.84 & \increase{46.34}{1.50} \\
TQA         & 76.28 & 83.65 & 83.90 & \increase{84.14}{0.24}
            & 76.73 & 84.90 & 86.70 & \increase{86.70}{0.00} \\
ScienceQA   & 78.44 & 85.52 & 85.28 & \increase{85.97}{0.69}
            & 78.82 & 84.04 & 85.03 & \increase{86.07}{1.04} \\
MMStar      & 51.65 & 59.27 & 60.07 & \increase{60.67}{0.60}
            & 54.22 & 61.00 & 63.73 & \increase{65.07}{1.34} \\
MMMU        & 48.39 & 53.66 & 58.63 & \increase{59.01}{0.38}
            & 52.79 & 58.63 & 65.34 & \increase{66.46}{1.12} \\
\midrule
\textbf{Average}
            & 57.29 & 63.75 & 66.56 & \increase{67.67}{1.11}
            & 61.15 & 68.84 & 71.68 & \increase{72.58}{0.90} \\
\midrule
\bottomrule
\end{tabular}
\label{tab:entropyvsmv:merged}
\end{table}